\newtheorem{theorem}{Theorem}
\newtheorem{lemma}{Lemma}
\newtheorem{remark}{Remark}
\newtheorem{assumption}{\textbf{Assumption}}
\newtheorem{definition}{\textbf{Definition}}
\def\BibTeX{{\rm B\kern-.05em{\sc i\kern-.025em b}\kern-.08em
    T\kern-.1667em\lower.7ex\hbox{E}\kern-.125emX}}
\begin{document}

\title{Gradient Coding in Decentralized Learning for Evading Stragglers\\

\thanks{This work was supported by Digital Futures.}
}

\author{\IEEEauthorblockN{Chengxi Li}
\IEEEauthorblockA{\textit{School of Electrical Engineering and Computer Science} \\
\textit{KTH Royal Institute of Technology}\\
Stockholm, Sweden \\
chengxli@kth.se}
\and
\IEEEauthorblockN{Mikael Skoglund}
\IEEEauthorblockA{\textit{School of Electrical Engineering and Computer Science} \\
\textit{KTH Royal Institute of Technology}\\
Stockholm, Sweden \\
skoglund@kth.se}
}

\maketitle

\begin{abstract}
In this paper, we consider a \textit{decentralized learning} problem in the presence of stragglers. Although gradient coding techniques have been developed for \textit{distributed learning} to evade stragglers, where the devices send encoded gradients with redundant training data, it is difficult to apply those techniques directly to decentralized learning scenarios. To deal with this problem, we propose a new \underline{go}ssip-based decentralized learning method with gradient \underline{co}ding (GOCO). In the proposed method, to avoid the negative impact of stragglers, the parameter vectors are updated locally using encoded gradients based on the framework of stochastic gradient coding and then averaged in a gossip-based manner. We analyze the convergence performance of GOCO for strongly convex loss functions. And we also provide simulation results to demonstrate the superiority of the proposed method in terms of learning performance compared with the baseline methods.
\end{abstract}

\begin{IEEEkeywords}
decentralized learning, gradient coding, stragglers
\end{IEEEkeywords}

\section{Introduction}
With increasing volume of data available for training models, the development and application of machine learning across various domains have been significantly boosted \cite{balkus2022survey}. In practice, training machine learning models has a resource-intensive nature. To address high demands for computational resources, edge learning is a vital approach offering several key benefits by distributing the computational tasks across multiple edge devices instead of relying on a single device \cite{feng2022heterogeneous}. Edge learning can be realized in two different ways, which are known as distributed learning \cite{liu2022distributed} and decentralized learning (DEL) \cite{beltran2023decentralized}. In distributed learning, a central server coordinates the training among the devices by maintaining a global model \cite{li2021communication, beznosikov2023biased}. In contrast, in DEL, the devices train a model collectively without the aid of a central server and they communicate with each other in a peer-to-peer manner \cite{taheri2023generalization, li2021decentralized}. Maintaining a central server as done in distributed learning results in very high infrastructure cost. In that case, DEL has gained much attention for its advantages of significant cost savings and enhanced scalability. 

In edge learning, some devices can be slow or unresponsive from time to time due to unanticipated incidents and they are known as stragglers \cite{yu2020straggler}. In the presence of stragglers, there is a notable degradation in learning performance. To mitigate their negative impact, gradient coding (GC) techniques have been proposed \cite{tandon2017gradient, ozfatura2019gradient, buyukates2022gradient, glasgow2021approximate,wang2019erasurehead,bitar2020stochastic,Chengxi20231-bitGC}. In GC, training data are distributed among devices redundantly. Then, during the training process, devices transmit encoded gradients computed from local training data and the global model update can be obtained robustly at the central server even in the presence of some stragglers. 

Existing GC techniques can be classified into exact GC techniques and approximate GC techniques. In exact GC techniques, the true model update can be recovered exactly as if no straggler was present. For instance, in \cite{tandon2017gradient}, each device encodes the local gradients by linear combinations with carefully-designed coefficients and sends an encoded vector in each iteration, where training data are allocated based on fractional repetition codes. In \cite{ozfatura2019gradient}, each device transmits multiple encoded vectors using local gradients to further reduce the training time. Noting that machine learning algorithms are robust to a certain level of noise, it is reasonable to only recover an approximate model update to update the trained model in each iteration. Based on that, approximate GC techniques are developed. In \cite{glasgow2021approximate}, approximate GC via expander graphs is proposed, which carefully designs the decoding coefficients to ensure optimal performance. In \cite{bitar2020stochastic}, the framework of stochastic gradient coding (SGC) is designed, where training data are allocated to the devices in a pair-wise balanced manner. Under this framework, it is proved that the convergence performance mirrors that of batched stochastic gradient descent (SGD). However, those methods are all proposed for distributed learning scenarios, which require to maintain a global model during the training process and can not be directly applied to cope with the DEL problem. 

In this paper, to deal with the DEL problem in the presence of stragglers, we propose a new \underline{go}ssip-based DEL method with gradient \underline{co}ding (GOCO) by combining the advantages of SGC in \cite{bitar2020stochastic} and gossip-based approaches. In the proposed method, multiple subsets of the training data are firstly distributed to the devices in a pair-wise balanced manner under the framework of SGC. Subsequently, during each iteration of the learning process, each nonstraggler device updates its parameter vector using encoded local gradients. It then averages the parameters of its own and those from its neighbors with a gossip-based approach, and transmits the averaged result to local neighbors as defined by the communication graph. We analyze the convergence performance of the proposed method for strongly convex loss functions. Finally, the superiority of the proposed method over the baseline methods is verified via simulation results. 

\section{System Model}
\label{system model}
With the training dataset containing \(m\) subsets \(\mathcal{D} = \left\{ {{\mathcal{D}_1},...,{\mathcal{D}_m}} \right\}\), the aim of DEL is to optimize the model parameter vector \({{\mathbf{x}}} \in \mathbb{R}^d\) to attain minimal training loss with $n$ edge devices. The optimal parameter vector can be expressed as \cite{koloskova2019decentralized, ding2023dsgd}
\begin{align}
\label{model1}
{{\mathbf{x}}^*} = \mathop {\arg {\text{ min}}}\limits_{{\mathbf{x}} \in {\mathbb{R}^d}} \left[ {f\left( {\mathbf{x}} \right) = \frac{1}{m}\sum\limits_{k = 1}^m {{f_k}\left( {\mathbf{x}} \right)} } \right],
\end{align}
where \(f_k({\mathbf{x}}): \mathbb{R}^d \to \mathbb{R}\) is the training loss associated with subset ${\mathcal{D}_k}$ \cite{koloskova2019decentralized, ding2023dsgd}:
\begin{align}
\label{fix}
{f_k}({\mathbf{x}}) = {\mathbb{E}_{{\xi _k}\sim{\mathcal{D}_k}}}{F_k}\left( {{\mathbf{x}},{\xi _k}} \right).
\end{align}
In (\ref{fix}), \({F_k}\left( {{\mathbf{x}},{\xi _k}} \right): \mathbb{R}^d \to \mathbb{R}\) is the training loss on sample \({{\xi _k}}\) in \({\mathcal{D}_k}\). 

In DEL scenarios, the training subsets are firstly distributed to the devices. Subsequently, in each iteration of the learning process, each device utilizes its local training data and information received from neighbors to update its parameter vector \cite{ding2023dsgd}, where neighbors are defined by a symmetric matrix \({\mathbf{W}} \in {\mathbb{R}^{n \times n}}\). In \({\mathbf{W}} \), as the \((i,j)\text{-th}\) element, \(w_{i,j}>0\) implies device \(i\) and device \(j\) are local neighbors and \(w_{i,j}=0\) indicates the opposite. Equivalently, the undirected communication graph is denoted as \(\mathcal{G}=([n],\mathcal{E})\), \([n]=\{{1,...,n}\)\}, where edge \({\{i,j\}}\in\mathcal{E}\) if and only if \(w_{i,j}>0, i,j \in [n]\). Here, we assume that \({\mathbf{W}} \) is doubly stochastic satisfying \({\mathbf{W1}} = {\mathbf{1}}\) and \({{\mathbf{1}}^T}{\mathbf{W}} = {{\mathbf{1}}^T}\), where \({\mathbf{1}}\) is an \(n \times 1\) all-one vector \cite{koloskova2019decentralized}. It is also assumed that \(w_{i,i}>0, i \in [n]\) \cite{koloskova2019decentralized}. We order the eigenvalues of \({\mathbf{W}} \) as  \(1 = \left| {{\lambda _1}\left( \mathbf{W} \right)} \right| > \left| {{\lambda _2}\left( \mathbf{W} \right)} \right| \geq ... \geq \left| {{\lambda _n}\left( \mathbf{W} \right)} \right|\). Representing the spectral norm of a matrix $\mathbf{X}$ as \({\left\|  \mathbf{X}  \right\|_2}\), we define $\rho  = 1 - \left| {{\lambda _2}\left( \mathbf{W} \right)} \right|$ and $\beta  = {\left\| {{\mathbf{I}} - {\mathbf{W}}} \right\|_2}$.

During the learning process, in each iteration, the probability of each device being a straggler is denoted as $p$. The straggler behavior is assumed to be independent among the devices and iterations \cite{bitar2020stochastic}. Based on that, to indicate their straggler behaviour, let us define
\begin{align}
\label{Iit}
I_i^t = 
\begin{cases} 
1, & \text{if device } i\text{ is not a straggler}, \\
0, & \text{if device } i\text{ is a straggler},
\end{cases}
\end{align}
in iteration $t$, $\forall t$. The random variables \(\left\{ {I_i^t,\forall i} \right\}\) in (\ref{Iit}) are independent and identically distributed Bernoulli random variables and their probability mass function can be expressed as 
\begin{align}
\label{Iit_bernoulli}
\Pr \left( {I_i^t = 1} \right) = 1 - p, \Pr \left( {I_i^t = 0} \right) = p.
\end{align}
In that case, in each iteration, only the nonstragglers perform computations and transmissions, which may degrade the learning performance of DEL if no countermeasure is adopted. Noting that existing GC techniques can only be applied when there is a global model maintained by a central server, they are not easily applicable for the DEL problem with stragglers \cite{tandon2017gradient, ozfatura2019gradient, buyukates2022gradient, glasgow2021approximate,wang2019erasurehead,bitar2020stochastic,Chengxi20231-bitGC}. To address this issue, we aim to propose a new DEL method to enhance the learning performance by evading the negative impact of stragglers. 

\section{The Proposed Method}
\label{the proposed method}

\begin{figure*}[h]
    \centering
    \includegraphics[width=0.85\textwidth]{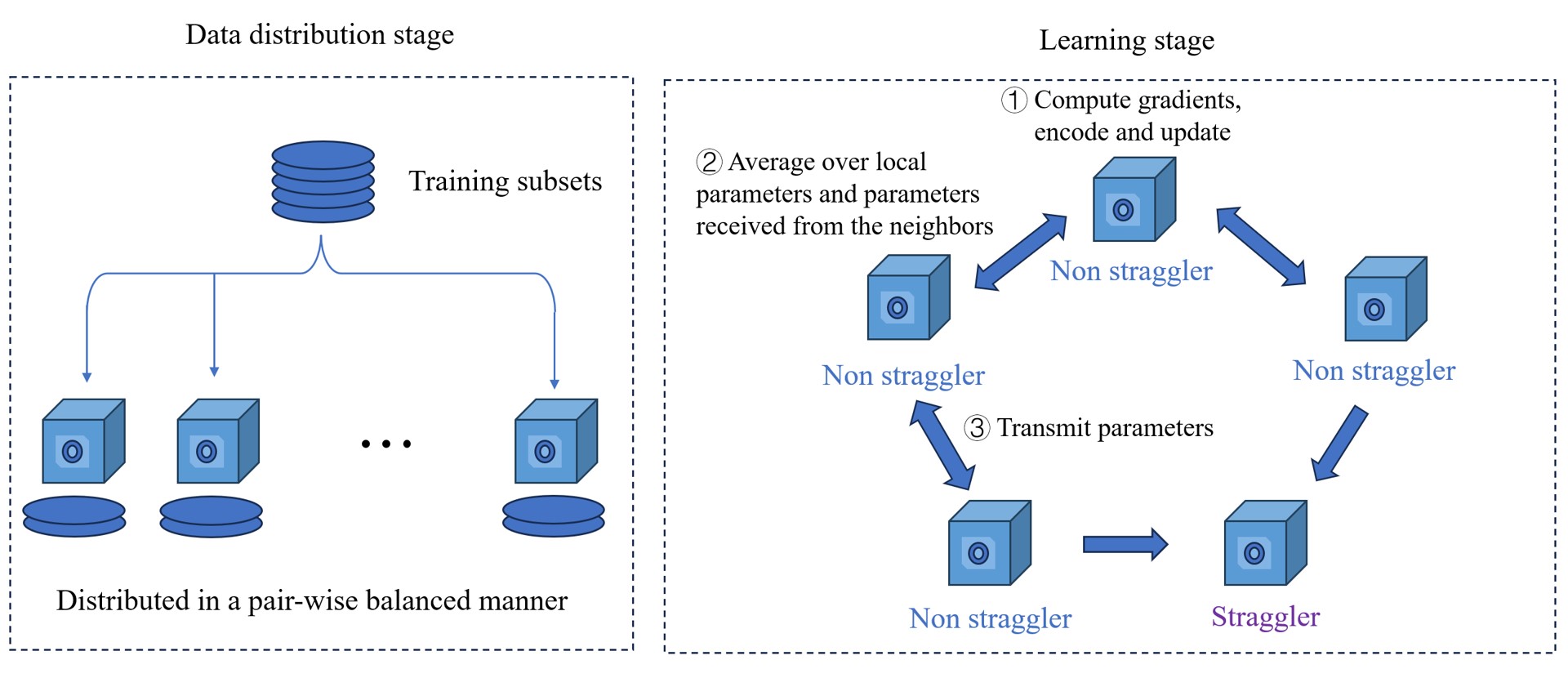}
    \caption{The flowchart of GOCO.}
    \label{fig:GOCO}
\end{figure*}

In this section, we propose our method to deal with the problem introduced in Section \ref{system model}. 

Motivated by the SGC framework introduced in \cite{bitar2020stochastic}, before the learning starts, the subsets \( \left\{ {{\mathcal{D}_1},...,{\mathcal{D}_m}} \right\}\) are distributed to the devices in a pair-wise balanced manner due to its advantages of easy implementation and ability to approximate a fully random distribution of training data. In this framework, subset \({\mathcal{D}_k}\) is allocated to \(d_k\) devices, \(\forall k\), and the number of devices that hold \({\mathcal{D}_{k_1}}\) and \({\mathcal{D}_{k_2}}\) together is given as \(\frac{{{d_{{k_1}}}{d_{{k_2}}}}}{n}\), for \({k_1} \ne {k_2}\). A matrix \({\mathbf{S}}\) is defined to indicate how the training subsets are distributed among the devices, where the \((i,k){\text{ - th}}\) element \({s_{i,k}}=1\) shows subset \({\mathcal{D}_k}\) is distributed to device \(i\) and \({s_{i,k}}=0\) indicates the opposite, \(i \in \left[ n \right]\), \(k \in \left[ m \right]\). 

At the beginning of the learning process, the parameter vector at device $i$ is initialized as \({{\mathbf{x}}_i^0}\), \(i \in \left[ n \right]\). Subsequently, in iteration $t$, for a nonstraggler device $i$, it samples \(\xi_{i,k}^{t}\) from \({\mathcal{D}_k}\) to calculate the stochastic gradient \( \nabla F_k(\mathbf{x}_i^{t}, \xi_{i,k}^{t})\) for \(k \in \left\{ {\left. k \right|{s_{i,k}} = 1} \right\}\), where the parameter vector of device $i$ is denoted by \({{\mathbf{x}}_i^t}\), \(\forall i\). After finishing this, the locally computed gradients are encoded into \({\mathbf{g}}_i^t\) at device $i$:  
\begin{align}
\label{g}
{\mathbf{g}}_i^t=\sum\limits_{k = 1}^m {\frac{1}{{{d_k}}}\nabla {F_k}({\mathbf{x}}_i^t,\xi _{i,k}^t){s_{i,k}}}.
\end{align}
Then, device $i$ updates its parameter vector with the encoded vector given as (\ref{g}) as 
\begin{align}
\label{x1/2}
{\mathbf{x}}_i^{t + \frac{1}{2}} = {\mathbf{x}}_i^t - \eta {\mathbf{g}}_i^t,
\end{align}
where \(\eta\) denotes the learning rate. Next, device $i$ takes the average over ${\mathbf{x}}_i^{t + \frac{1}{2}}$ and the vectors received from local neighbors with a gossip-based approach to obtain the updated parameter vector of iteration $t$: 
\begin{align}
\label{gossip_goco}
\mathbf{x}_i^{t+1} = \mathbf{x}_i^{t+\frac{1}{2}} + \gamma \sum_{\{i,j\} \in \mathcal{E}} w_{ij} ({\mathbf{x}}_j^{t} - {\mathbf{x}}_i^{t}),
\end{align}
where \(\gamma\) denotes the step size. At the end of iteration $t$, device $i$ sends $\mathbf{x}_i^{t+1}$ to its neighbors. 

Straggler devices do not take part in the current iteration. As a result, their local parameter vectors are unchanged. They only receive messages from their neighbors and store them locally. Note that this saved information can be used later when the devices become nonstragglers. The proposed GOCO method is illustrated as Fig. \ref{fig:GOCO}. 

Although the SGC framework adopted in this paper is inspired by existing work, there are significant differences between our proposed GOCO method and the original distributed learning method based on SGC proposed in \cite{bitar2020stochastic}. Specifically, GOCO integrates the advantages of SGC with a gossip-based approach in DEL scenarios to circumvent stragglers without relying on a central server. In contrast, the original SGC methodology requires a central server to ensure convergence in the presence of stragglers. While the training data distribution is similar in both this paper and \cite{bitar2020stochastic}, the manner in which information is shared among devices differs markedly, leading to distinctly different convergence analyses. This distinction will be seen more apparently in the following section.
\section{Theoretical Analysis}
\label{theoretical analysis}
In this section, the convergence performance of GOCO is analyzed. Before that, some useful definitions and assumptions are given to aid the derivation of the main theorem. 

\begin{definition}
\label{def1}
    (Strongly convex). A \(\mu\)-strongly convex function \(f({\mathbf{x}})\) is a function that satisfies  
     \begin{align}
    f({\mathbf{x}}) \geq f({\mathbf{x'}}) + \left\langle {\nabla f({\mathbf{x'}}),{\mathbf{x}} - {\mathbf{x'}}} \right\rangle  + \frac{\mu }{2}{\left\| {{\mathbf{x}} - {\mathbf{x'}}} \right\|^2},\forall {\mathbf{x}},{\mathbf{x'}} \in \mathbb{R}^d.
    \end{align}
\end{definition}

\begin{definition}
    \label{def2}
    (Smooth). An \(L\)-smooth function \(f({\mathbf{x}})\) is a function that satisfies 
    \begin{align}
    f({\mathbf{x}}) \leq f({\mathbf{x'}}) + \left\langle {\nabla f({\mathbf{x'}}),{\mathbf{x}} - {\mathbf{x'}}} \right\rangle  + \frac{L }{2}{\left\| {{\mathbf{x}} - {\mathbf{x'}}} \right\|^2},\forall {\mathbf{x}},{\mathbf{x'}} \in \mathbb{R}^d.
    \end{align}
\end{definition}

\begin{assumption}
\label{assumption-loss-function}
Each function \(f_k({\mathbf{x}})\) is \(\mu\)-strongly convex and \(L\)-smooth with bounded variance, i.e.,
\begin{align}
\label{boundvar1}
&\mathbb{E}_{\xi_k \sim \mathcal{D}_k
} \left\| \nabla F_k(\mathbf{x}, \xi_k) - \nabla f_k(\mathbf{x}) \right\|^2 \leq \sigma_k^2, \quad \forall \mathbf{x} \in \mathbb{R}^d, k \in [m],\\
\label{boundvar2}
   & \mathbb{E}_{\xi_k \sim \mathcal{D}_k
} \left\| \nabla F_k(\mathbf{x}, \xi_k) \right\|^2 \leq G^2, \quad \forall \mathbf{x} \in \mathbb{R}^d, k \in [m].
\end{align}
\end{assumption}

\begin{assumption}
    \label{unbiased}
    The stochastic gradients are unbiased:
\begin{align}
\label{unbias}
{\mathbb{E}_{{\xi _k}\sim {\mathcal{D}_k}}}\left[ {\nabla {F_k}({\mathbf{x}},{\xi _k})} \right] = \nabla {f_k}({\mathbf{x}}),\;\;\;{\kern 1pt} \forall {\mathbf{x}} \in {\mathbb{R}^d},k \in [m].
\end{align}
\end{assumption}

\begin{assumption}
    \label{subsets variation}
    The gradients associated with different training subsets satify 
\begin{align}
    \label{assump3}
    \left\| {\nabla {f_k}({{\mathbf{x}}^*}) - \nabla f({{\mathbf{x}}^*})} \right\| \leq C,\forall k,
\end{align}
where ${\mathbf{x}}^*$ is the optimal parameter vector given as (\ref{model1}). 
\end{assumption}

Next, we provide four lemmas, which are helpful for the derivation of the main theorem.  

\begin{lemma}
\label{xt-yitaIG-x*}
For the proposed method, we can derive that
\begin{align}
    \label{a11_final}
  \mathbb{E}&\left[ {\left. {{{\left\| {{{{\mathbf{\bar x}}}^t} - \frac{\eta }{n}\sum\limits_{i = 1}^n {I_i^t} {\mathbf{g}}_i^t - {{\mathbf{x}}^*}} \right\|}^2}} \right|{\Im _{t - 1}}} \right] \nonumber \\
   \leq& {q_1}{\left\| {{{{\mathbf{\bar x}}}^t} - {{\mathbf{x}}^*}} \right\|^2} + {q_2}\left\| {{{{\mathbf{\bar X}}}^t} - {{\mathbf{X}}^t}} \right\|_F^2 \nonumber \\
 & + \frac{{4{\eta ^2}}}{{{n^2}}}\left( {p - {p^2}} \right){C^2}\sum\limits_{k = 1}^m {\left( {\frac{1}{{{d_k}}} - \frac{1}{n}} \right)}  \nonumber \\
 & + 2\eta \frac{m}{n} \left( {1 - p} \right)\left[ {f\left( {{{\mathbf{x}}^*}} \right) - f\left( {{{{\mathbf{\bar x}}}^t}} \right)} \right] + \frac{{{\eta ^2}}}{{{n}}}\left( {1 - p} \right)\sum\limits_{k = 1}^m {\frac{{\sigma _k^2}}{{{d_k}}}},
\end{align}
where \({\left\|  \cdot  \right\|_F}\) denotes the Frobenius Norm, \(\mathbb{E}\left( {\left.  \cdot  \right|{\Im _{t - 1}}} \right)\) is the expectation taken conditioned on the previous iterations \(\left\{ {0,...,t-1 } \right\}\), ${{\mathbf{X}}^{t}} = \left[ {{\mathbf{x}}_1^t,...,{\mathbf{x}}_n^t} \right]$, ${{\mathbf{\bar x}}^t} = \frac{1}{n}\sum\limits_{i = 1}^n {{\mathbf{x}}_i^t}$, ${{\mathbf{\bar X}}^{t}} = \left[ {{\mathbf{\bar x}}^t,...,{\mathbf{\bar x}}^t} \right]$, ${q_1} = 1 - {q_{1,1}}\eta  + {q_{1,2}}{\eta ^2}$, ${q_{1,1}} = \frac{\mu \left( {1 - p} \right){a_{\min }}}{2}$, ${q_{1,2}} = {\frac{{2}}{{{n^3}}}\left( {p - {p^2}} \right){m^2}{L^2} + \frac{{2}}{{{n^2}}}{{\left( {1 - p} \right)}^2}{m^2}{L^2}} + \frac{{4{L^2}\left( {1 - p} \right)p}}{{{n^2}}}\sum\limits_{k = 1}^m {\left( {\frac{1}{{{d_k}}} - \frac{1}{n}} \right)}$, ${q_2} = {q_{2,1}}{\eta ^2} + {q_{2,2}}\eta$, ${q_{2,1}} = 2\left( {1 - p} \right){L^2}\left( {\frac{{pa_{\max }^2}}{{{n^2}}} + \frac{{{a_{\max }}\left( {1 - p} \right)}m}{n^2}} \right)$, and ${q_{2,2}} = \frac{L}{n}\left( {1 - p} \right){a_{\max }} + \frac{\mu }{{n}}\left( {1 - p} \right){a_{\min }}$. In the above denotations, it is defined that ${a_i} = \sum\limits_{k = 1}^m {\frac{{{s_{i,k}}}}{{{d_k}}}}, \forall i$, ${a_{\max }} = \max \left\{ {{a_1},...,{a_n}} \right\}$, and ${a_{\min }} = \min \left\{ {{a_1},...,{a_n}} \right\}$.  
\end{lemma}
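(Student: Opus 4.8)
The plan is to expand the squared norm inside (\ref{a11_final}) as
\[
\Bigl\|{{\mathbf{\bar x}}^{t}}-\tfrac{\eta}{n}\textstyle\sum_{i}I_i^t{\mathbf{g}}_i^t-{\mathbf{x}}^{*}\Bigr\|^{2}
=\bigl\|{{\mathbf{\bar x}}^{t}}-{\mathbf{x}}^{*}\bigr\|^{2}
-\tfrac{2\eta}{n}\Bigl\langle {{\mathbf{\bar x}}^{t}}-{\mathbf{x}}^{*},\textstyle\sum_{i}I_i^t{\mathbf{g}}_i^t\Bigr\rangle
+\tfrac{\eta^{2}}{n^{2}}\Bigl\|\textstyle\sum_{i}I_i^t{\mathbf{g}}_i^t\Bigr\|^{2},
\]
and take $\mathbb{E}[\,\cdot\mid\Im_{t-1}]$ of each term, using that $\{I_i^t\}_i$ and $\{\xi_{i,k}^t\}_{i,k}$ are mutually independent and independent of $\Im_{t-1}$ while ${\mathbf{x}}_i^t$ and ${{\mathbf{\bar x}}^{t}}$ are $\Im_{t-1}$-measurable. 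Two structural properties of the SGC allocation will do most of the work: $\sum_i s_{i,k}/d_k=1$ for every $k$, and the pair-wise balance identity $\sum_i \tfrac{s_{i,k_1}s_{i,k_2}}{d_{k_1}d_{k_2}}=\tfrac1n$ for $k_1\neq k_2$ and $=\tfrac1{d_k}$ for $k_1=k_2=k$; together with the optimality relation $\nabla f({\mathbf{x}}^{*})=0$ these give $\sum_k\nabla f_k({\mathbf{x}}^{*})=0$ and $\|\nabla f_k({\mathbf{x}}^{*})\|=\|\nabla f_k({\mathbf{x}}^{*})-\nabla f({\mathbf{x}}^{*})\|\leq C$ by Assumption \ref{subsets variation}.

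For the linear term, Assumption \ref{unbiased} gives $\mathbb{E}[{\mathbf{g}}_i^t\mid\Im_{t-1}]=\sum_k\tfrac{s_{i,k}}{d_k}\nabla f_k({\mathbf{x}}_i^t)$, so I would reduce the cross term to $\tfrac{1-p}{n}\sum_i\sum_k\tfrac{s_{i,k}}{d_k}\langle\nabla f_k({\mathbf{x}}_i^t),{{\mathbf{\bar x}}^{t}}-{\mathbf{x}}^{*}\rangle$, write ${{\mathbf{\bar x}}^{t}}-{\mathbf{x}}^{*}=({\mathbf{x}}_i^t-{\mathbf{x}}^{*})+({{\mathbf{\bar x}}^{t}}-{\mathbf{x}}_i^t)$, and apply $\mu$-strong convexity (Definition \ref{def1}) to the first inner product and $L$-smoothness (Definition \ref{def2}) to the second. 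This yields, for each $k$, $\langle\nabla f_k({\mathbf{x}}_i^t),{{\mathbf{\bar x}}^{t}}-{\mathbf{x}}^{*}\rangle\geq f_k({{\mathbf{\bar x}}^{t}})-f_k({\mathbf{x}}^{*})+\tfrac{\mu}{2}\|{\mathbf{x}}_i^t-{\mathbf{x}}^{*}\|^{2}-\tfrac{L}{2}\|{{\mathbf{\bar x}}^{t}}-{\mathbf{x}}_i^t\|^{2}$. Summing the $f_k$-difference against $s_{i,k}/d_k$ collapses it (by $\sum_i s_{i,k}/d_k=1$) to $\tfrac{m}{n}[f({{\mathbf{\bar x}}^{t}})-f({\mathbf{x}}^{*})]$, which is the $2\eta\tfrac{m}{n}(1-p)[f({\mathbf{x}}^{*})-f({{\mathbf{\bar x}}^{t}})]$ term; replacing $a_i=\sum_k s_{i,k}/d_k$ by $a_{\min}$ or $a_{\max}$ according to the sign of the surrounding coefficient, using $\|{\mathbf{x}}_i^t-{\mathbf{x}}^{*}\|^{2}\geq\tfrac12\|{{\mathbf{\bar x}}^{t}}-{\mathbf{x}}^{*}\|^{2}-\|{{\mathbf{\bar x}}^{t}}-{\mathbf{x}}_i^t\|^{2}$, and recognizing $\sum_i\|{{\mathbf{\bar x}}^{t}}-{\mathbf{x}}_i^t\|^{2}=\|{{\mathbf{\bar X}}^{t}}-{{\mathbf{X}}^{t}}\|_F^{2}$ then turns the $\mu$- and $L$-contributions into the $-q_{1,1}\eta\|{{\mathbf{\bar x}}^{t}}-{\mathbf{x}}^{*}\|^{2}$ piece of $q_1$ and the $q_{2,2}\eta\|{{\mathbf{\bar X}}^{t}}-{{\mathbf{X}}^{t}}\|_F^{2}$ piece of $q_2$.

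For the quadratic term I would write ${\mathbf{g}}_i^t=\sum_k\tfrac{s_{i,k}}{d_k}\nabla f_k({\mathbf{x}}_i^t)+{\mathbf{n}}_i^t$ with ${\mathbf{n}}_i^t$ the zero-mean sampling error, and expand the conditional second moment of $\sum_i I_i^t{\mathbf{g}}_i^t$ by independence of the $I_i^t$ into a mean-of-sum term $(1-p)^{2}\bigl\|\sum_i\sum_k\tfrac{s_{i,k}}{d_k}\nabla f_k({\mathbf{x}}_i^t)\bigr\|^{2}$, a straggler-variance term $(p-p^{2})\sum_i\bigl\|\sum_k\tfrac{s_{i,k}}{d_k}\nabla f_k({\mathbf{x}}_i^t)\bigr\|^{2}$, and a sampling-variance term $(1-p)\sum_i\mathbb{E}[\|{\mathbf{n}}_i^t\|^{2}\mid\Im_{t-1}]$. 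The last is controlled by Assumption \ref{assumption-loss-function} together with $\sum_i s_{i,k}=d_k$, giving the $(1-p)\sum_k\sigma_k^{2}/d_k$ term. In the first two I would decompose each $\nabla f_k({\mathbf{x}}_i^t)=\nabla f_k({{\mathbf{\bar x}}^{t}})+(\nabla f_k({\mathbf{x}}_i^t)-\nabla f_k({{\mathbf{\bar x}}^{t}}))$: the residuals are absorbed, via $L$-smoothness, $a_i\leq a_{\max}$, and a Cauchy--Schwarz over the $m$ subsets in the mean-of-sum term, into the consensus term $\|{{\mathbf{\bar X}}^{t}}-{{\mathbf{X}}^{t}}\|_F^{2}$ (this supplies $q_{2,1}\eta^{2}$); the $\nabla f_k({{\mathbf{\bar x}}^{t}})$ parts, after invoking the pair-wise balance identity, become $\tfrac{m^{2}}{n}\|\nabla f({{\mathbf{\bar x}}^{t}})\|^{2}+\sum_k(\tfrac1{d_k}-\tfrac1n)\|\nabla f_k({{\mathbf{\bar x}}^{t}})\|^{2}$ (and $m\nabla f({{\mathbf{\bar x}}^{t}})$ inside the mean-of-sum term), which by $\|\nabla f({{\mathbf{\bar x}}^{t}})\|\leq L\|{{\mathbf{\bar x}}^{t}}-{\mathbf{x}}^{*}\|$ and $\|\nabla f_k({{\mathbf{\bar x}}^{t}})\|^{2}\leq 2L^{2}\|{{\mathbf{\bar x}}^{t}}-{\mathbf{x}}^{*}\|^{2}+2C^{2}$ produce the $q_{1,2}\eta^{2}\|{{\mathbf{\bar x}}^{t}}-{\mathbf{x}}^{*}\|^{2}$ piece and the $\tfrac{4\eta^{2}}{n^{2}}(p-p^{2})C^{2}\sum_k(\tfrac1{d_k}-\tfrac1n)$ term. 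Gathering the coefficients of $\|{{\mathbf{\bar x}}^{t}}-{\mathbf{x}}^{*}\|^{2}$ into $q_1=1-q_{1,1}\eta+q_{1,2}\eta^{2}$ and of $\|{{\mathbf{\bar X}}^{t}}-{{\mathbf{X}}^{t}}\|_F^{2}$ into $q_2=q_{2,2}\eta+q_{2,1}\eta^{2}$ then yields (\ref{a11_final}).

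The step I expect to be the main obstacle is the conditional second-moment computation for the quadratic term: cleanly isolating the three sources of randomness (the Bernoulli $I_i^t$, the samples $\xi_{i,k}^t$, and the $\Im_{t-1}$-measurable gradients), and, above all, exploiting the pair-wise balanced allocation together with $\nabla f({\mathbf{x}}^{*})=0$ to replace crude bounds by the sharp coefficients involving $\sum_k(\tfrac1{d_k}-\tfrac1n)$, while steering the discrepancy between $\nabla f_k({\mathbf{x}}_i^t)$ and $\nabla f_k({{\mathbf{\bar x}}^{t}})$ entirely into the consensus term $\|{{\mathbf{\bar X}}^{t}}-{{\mathbf{X}}^{t}}\|_F^{2}$ rather than letting it inflate the $\|{{\mathbf{\bar x}}^{t}}-{\mathbf{x}}^{*}\|^{2}$ or constant parts. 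The linear term is by comparison a standard strong-convexity and smoothness manipulation.
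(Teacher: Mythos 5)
Your proposal is correct and follows essentially the same route as the paper's proof: the cross term vanishes by unbiasedness, the Bernoulli second moment is expanded into $(1-p)^2$ and $(p-p^2)$ pieces and sharpened via the pair-wise balance identity with $\nabla f(\mathbf{x}^*)=0$, the gradients are split at ${\mathbf{\bar x}}^t$ (with a Young-type inequality) to route deviations into $\left\| {{{{\mathbf{\bar X}}}^t} - {{\mathbf{X}}^t}} \right\|_F^2$, and the linear term is handled by the same strong-convexity/smoothness argument with $a_{\min},a_{\max}$. The only differences are harmless reorderings (expanding over $I_i^t$ before splitting, and isolating the sampling noise inside the quadratic term, which in fact gives a slightly tighter $\eta^2/n^2$ variance factor than the stated $\eta^2/n$).
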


\begin{proof}
    Please see Appendix \ref{appendix xt-yitaIG-x*}.
\end{proof}



\begin{lemma}
\label{GOCO-devia-optimal}
We can bound the deviation between the average parameter vector and the optimal parameter vector as 
    \begin{align}
    \label{overall1}
  \mathbb{E}&\left[ {\left. {{{\left\| {{{{\mathbf{\bar x}}}^{t + 1}} - {{\mathbf{x}}^*}} \right\|}^2}} \right|{\Im _{t - 1}}} \right] \nonumber \\
   \leq & \left( {1 + \Delta } \right){q_1}{\left\| {{{{\mathbf{\bar x}}}^t} - {{\mathbf{x}}^*}} \right\|^2} + \left( {1 + \Delta } \right){q_2}\left\| {{{{\mathbf{\bar X}}}^t} - {{\mathbf{X}}^t}} \right\|_F^2 \nonumber \\
   &+ \left( {1 + \Delta } \right)\frac{{4{\eta ^2}}}{{{n^2}}}\left( {p - {p^2}} \right){C^2}\sum\limits_{k = 1}^m {\left( {\frac{1}{{{d_k}}} - \frac{1}{n}} \right)}  \nonumber\\
  & + \left( {1 + \Delta } \right)2\eta \frac{m}{n}\left( {1 - p} \right)\left[ {f\left( {{{\mathbf{x}}^*}} \right) - f\left( {{{{\mathbf{\bar x}}}^t}} \right)} \right] \nonumber \\
   &+ \left( {1 + \Delta } \right)\frac{{{\eta ^2}}}{n}\left( {1 - p} \right)\sum\limits_{k = 1}^m {\frac{{\sigma _k^2}}{{{d_k}}}}  \nonumber \\
  &+\left( {1 + \frac{1}{\Delta }} \right)\frac{{{\gamma ^2}}}{n}\left( {1 - p} \right){\beta ^2}\left\| {{{\mathbf{X}}^t} - {{{\mathbf{\bar X}}}^t}} \right\|_F^2,\Delta  > 0.
\end{align} 
\end{lemma}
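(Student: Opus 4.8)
The plan is to split the error $\bar{\mathbf{x}}^{t+1} - \mathbf{x}^*$ into the quantity already controlled by Lemma~\ref{xt-yitaIG-x*} plus a residual ``gossip'' term, and then glue the two pieces together with Young's inequality (which is where the free parameter $\Delta$ enters). The first step is to average the per-device update over $i\in[n]$: a straggler keeps $\mathbf{x}_i^{t+1}=\mathbf{x}_i^t$ while a nonstraggler performs (\ref{x1/2})--(\ref{gossip_goco}), so with the indicators $I_i^t$ the update reads $\mathbf{x}_i^{t+1} = \mathbf{x}_i^t + I_i^t\big({-}\eta\mathbf{g}_i^t + \gamma\mathbf{r}_i^t\big)$ with $\mathbf{r}_i^t = \sum_{\{i,j\}\in\mathcal{E}} w_{ij}(\mathbf{x}_j^t - \mathbf{x}_i^t)$. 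Summing and dividing by $n$ gives
\[
\bar{\mathbf{x}}^{t+1} - \mathbf{x}^* = \underbrace{\Big(\bar{\mathbf{x}}^t - \tfrac{\eta}{n}\textstyle\sum_{i=1}^n I_i^t\mathbf{g}_i^t - \mathbf{x}^*\Big)}_{\mathbf{a}} \;+\; \underbrace{\tfrac{\gamma}{n}\textstyle\sum_{i=1}^n I_i^t\mathbf{r}_i^t}_{\mathbf{b}} .
\]

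Next I would apply $\|\mathbf{a}+\mathbf{b}\|^2 \le (1+\Delta)\|\mathbf{a}\|^2 + (1+\tfrac1\Delta)\|\mathbf{b}\|^2$ for $\Delta>0$ and take $\mathbb{E}[\,\cdot\,|\Im_{t-1}]$. The first term is immediate: $(1+\Delta)\,\mathbb{E}[\|\mathbf{a}\|^2\mid\Im_{t-1}]$ is exactly $(1+\Delta)$ times the right-hand side of (\ref{a11_final}), which reproduces every summand of (\ref{overall1}) except the last. For the residual term, note that $\{\mathbf{x}_i^t\}$ is $\Im_{t-1}$-measurable, hence each $\mathbf{r}_i^t$ is fixed given $\Im_{t-1}$ and the only randomness left in $\mathbf{b}$ is $\{I_i^t\}$, which is independent of $\Im_{t-1}$. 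Using Cauchy--Schwarz together with $(I_i^t)^2 = I_i^t$ yields $\big\|\sum_i I_i^t\mathbf{r}_i^t\big\|^2 \le n\sum_i I_i^t\|\mathbf{r}_i^t\|^2$, and since $\mathbb{E}[I_i^t]=1-p$,
\[
\mathbb{E}\big[\|\mathbf{b}\|^2 \,\big|\, \Im_{t-1}\big] \;\le\; \frac{\gamma^2(1-p)}{n}\sum_{i=1}^n \|\mathbf{r}_i^t\|^2 .
\]

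Finally I would convert the right-hand side into $\|\mathbf{X}^t - \bar{\mathbf{X}}^t\|_F^2$. Since $\mathbf{W}\mathbf{1}=\mathbf{1}$ and $\mathbf{W}$ is symmetric, $\mathbf{r}_i^t$ is the $i$-th column of $\mathbf{X}^t(\mathbf{W}-\mathbf{I})$; moreover $\bar{\mathbf{X}}^t(\mathbf{W}-\mathbf{I})=\mathbf{0}$ because all columns of $\bar{\mathbf{X}}^t$ coincide and $\mathbf{1}^T\mathbf{W}=\mathbf{1}^T$. Hence $\sum_i\|\mathbf{r}_i^t\|^2 = \|(\mathbf{X}^t-\bar{\mathbf{X}}^t)(\mathbf{W}-\mathbf{I})\|_F^2 \le \|\mathbf{I}-\mathbf{W}\|_2^2\,\|\mathbf{X}^t-\bar{\mathbf{X}}^t\|_F^2 = \beta^2\|\mathbf{X}^t-\bar{\mathbf{X}}^t\|_F^2$ by submultiplicativity of the spectral norm and the definition $\beta = \|\mathbf{I}-\mathbf{W}\|_2$. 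Plugging this in and adding the $(1+\Delta)$-scaled contribution from Lemma~\ref{xt-yitaIG-x*} gives (\ref{overall1}).

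The matrix identities in the last paragraph are routine. The step needing the most care is the treatment of $\mathbf{b}$: one must notice that the cross term $\mathbb{E}[\langle\mathbf{a},\mathbf{b}\rangle\mid\Im_{t-1}]$ does \emph{not} vanish, because the shared Bernoulli variables $\{I_i^t\}$ correlate $\mathbf{a}$ and $\mathbf{b}$, which is precisely why one uses Young's inequality with a tunable $\Delta$ instead of expanding $\|\mathbf{a}+\mathbf{b}\|^2$ directly. A secondary, deliberate slackness is that the Cauchy--Schwarz bound on $\big\|\sum_i I_i^t\mathbf{r}_i^t\big\|^2$ throws away the identity $\sum_i\mathbf{r}_i^t=\mathbf{0}$; retaining it would sharpen the constant but is not needed for the stated result.
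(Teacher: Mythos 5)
Your proposal is correct and follows essentially the same route as the paper: the same decomposition of $\bar{\mathbf{x}}^{t+1}-\mathbf{x}^*$ into the Lemma~\ref{xt-yitaIG-x*} term plus the straggler-masked gossip term (you write it per device, the paper uses the $\odot\,\mathbf{B}^t$ matrix form), the same Young's inequality with tunable $\Delta$, and the same bound on the gossip term via $(I_i^t)^2=I_i^t$, $\mathbb{E}[I_i^t]=1-p$, $\bar{\mathbf{X}}^t(\mathbf{W}-\mathbf{I})=\mathbf{0}$, and $\|\mathbf{A}\mathbf{B}\|_F\le\|\mathbf{A}\|_F\|\mathbf{B}\|_2$ with $\beta=\|\mathbf{I}-\mathbf{W}\|_2$. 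Your closing observations (the cross term does not vanish, and the identity $\sum_i\mathbf{r}_i^t=\mathbf{0}$ is discarded) accurately describe the slack the paper also accepts.
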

\begin{proof}
    Please see Appendix \ref{appendix GOCO-devia-optimal}.
\end{proof}

\begin{lemma}
\label{GOCO-divia1}
We can derive that 
\begin{align}
    \label{eq-goco-t+1}
 &\mathbb{E}\left[ {\left. {\left\| {{{\mathbf{X}}^{t + 1}} - {{{\mathbf{\bar X}}}^{t + 1}}} \right\|_F^2} \right|{\Im _{t - 1}}} \right]{\text{ }} \nonumber\\
 &\leq {w_1}\mathbb{E}\left[ {\left. {\left\| {{{{\mathbf{\bar X}}}^t} - {{\mathbf{X}}^t}} \right\|_F^2} \right|{\Im _{t - 1}}} \right] + {\kappa _0}{\eta ^2},
\end{align}
where ${\kappa _0} = \left( {1 - p} \right)\left( {1 + \frac{2}{{\gamma \rho }}} \right){G^2}{w_2}$, $0< w_1={\left( {1 - p} \right){{\left( {1 - \frac{{\gamma \rho }}{2}} \right)^2}} + p} <1$, and ${w_2} = \sum\limits_{k = 1}^m {\frac{1}{{{d_k}}} + } \frac{1}{n}\left( {\sum\limits_{k = 1}^m {\frac{1}{{{d_k}}}} } \right)\left( {\sum\limits_{k = 1}^m {{d_k}} } \right) - \frac{m}{n}$. 
\end{lemma}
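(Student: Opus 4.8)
The plan is to propagate the consensus error $\Omega^{t}:=\|\mathbf{X}^{t}-\bar{\mathbf{X}}^{t}\|_{F}^{2}$ through one iteration. First I would collect the per-device recursions (\ref{x1/2})--(\ref{gossip_goco}), together with the fact that a straggler leaves $\mathbf{x}_{i}^{t+1}=\mathbf{x}_{i}^{t}$ unchanged, into the single matrix identity
\begin{align}
\mathbf{X}^{t+1}=\mathbf{X}^{t}+\gamma\,\mathbf{X}^{t}(\mathbf{W}-\mathbf{I})\mathcal{I}^{t}-\eta\,\mathbf{G}^{t}\mathcal{I}^{t},\nonumber
\end{align}
where $\mathbf{G}^{t}=[\mathbf{g}_{1}^{t},\dots,\mathbf{g}_{n}^{t}]$ and $\mathcal{I}^{t}=\operatorname{diag}(I_{1}^{t},\dots,I_{n}^{t})$. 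With $\mathbf{J}=\tfrac{1}{n}\mathbf{1}\mathbf{1}^{T}$, right-multiplying by $\mathbf{I}-\mathbf{J}$ and using the double stochasticity of $\mathbf{W}$ (so that $(\mathbf{W}-\mathbf{I})\mathbf{J}=\mathbf{0}$ and $\bar{\mathbf{X}}^{t}(\mathbf{W}-\mathbf{I})=\mathbf{0}$) gives
\begin{align}
&\mathbf{X}^{t+1}-\bar{\mathbf{X}}^{t+1}=(\mathbf{X}^{t}-\bar{\mathbf{X}}^{t})\mathbf{M}^{t}-\eta\,\mathbf{G}^{t}\mathcal{I}^{t}(\mathbf{I}-\mathbf{J}),\nonumber\\
&\text{with}\quad \mathbf{M}^{t}:=(\mathbf{I}-\mathbf{J})+\gamma(\mathbf{W}-\mathbf{I})\mathcal{I}^{t}(\mathbf{I}-\mathbf{J}),\nonumber
\end{align}
which for the straggler-free case $\mathcal{I}^{t}=\mathbf{I}$ reduces to the usual mixing operator $\mathbf{I}+\gamma(\mathbf{W}-\mathbf{I})-\mathbf{J}$.

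Next I would split the two terms by Young's inequality, $\|\mathbf{A}+\mathbf{B}\|_{F}^{2}\le(1+\alpha)\|\mathbf{A}\|_{F}^{2}+(1+\tfrac{1}{\alpha})\|\mathbf{B}\|_{F}^{2}$ with a free $\alpha>0$, and then take the expectation over $\{I_{i}^{t}\}$ and the fresh samples conditioned on $\Im_{t-1}$; this is legitimate since $\mathbf{X}^{t}-\bar{\mathbf{X}}^{t}$ is $\Im_{t-1}$-measurable while the straggler mask and the samples are independent of it. For the gradient piece I would bound $\|\mathbf{G}^{t}\mathcal{I}^{t}(\mathbf{I}-\mathbf{J})\|_{F}^{2}\le\sum_{i}I_{i}^{t}\|\mathbf{g}_{i}^{t}\|^{2}$, estimate $\mathbb{E}\|\mathbf{g}_{i}^{t}\|^{2}$ from the encoding (\ref{g}) via Cauchy--Schwarz in the index $k$ together with $\|\nabla F_{k}(\cdot,\cdot)\|^{2}\le G^{2}$ from Assumption~\ref{assumption-loss-function}, and then sum over $i$, collapsing the double sum with the pairwise-balanced allocation identities $\sum_{i}s_{i,k}=d_{k}$ and $\sum_{i}s_{i,k}s_{i,\ell}=\tfrac{d_{k}d_{\ell}}{n}$ for $k\neq\ell$; this produces the constant $G^{2}w_{2}$ with $w_{2}$ as stated, carrying a factor $1-p$ from $\mathbb{E}[I_{i}^{t}]=1-p$.

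For the consensus piece I would use $\mathbb{E}\big[\|(\mathbf{X}^{t}-\bar{\mathbf{X}}^{t})\mathbf{M}^{t}\|_{F}^{2}\mid\Im_{t-1}\big]\le\Omega^{t}\,\|\mathbb{E}[\mathbf{M}^{t}(\mathbf{M}^{t})^{T}]\|_{2}$ (legitimate since the rows of $\mathbf{X}^{t}-\bar{\mathbf{X}}^{t}$ are orthogonal to $\mathbf{1}$ and $\mathbf{M}^{t}=(\mathbf{I}-\mathbf{J})\mathbf{M}^{t}$), expand $\mathbf{M}^{t}(\mathbf{M}^{t})^{T}$, and take the expectation using only $\mathbb{E}[\mathcal{I}^{t}]=(1-p)\mathbf{I}$ and $\mathbb{E}[\mathcal{I}^{t}(\mathbf{I}-\mathbf{J})\mathcal{I}^{t}]$. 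Combined with the spectral gap of $\mathbf{W}$ — the non-unit eigenvalues of $\mathbf{W}-\mathbf{I}$ lie at distance at least $\rho=1-|\lambda_{2}(\mathbf{W})|$ from $0$, so the mixing step contracts the component orthogonal to $\mathbf{1}$ when $\gamma$ is in a suitable range — this says that participating devices shrink the consensus error while straggler columns stay put, i.e.\ the raw per-step factor is of mixture type. Choosing $\alpha$ of order $\gamma\rho$ so that $(1+\alpha)$ times the raw contraction is at most $(1-\tfrac{\gamma\rho}{2})^{2}$ on the participating part while $1+\tfrac{1}{\alpha}\le1+\tfrac{2}{\gamma\rho}$, and checking that $\gamma$ is small enough that $w_{1}=(1-p)(1-\tfrac{\gamma\rho}{2})^{2}+p$ is a convex combination of a number in $(0,1)$ and $1$ (hence $0<w_{1}<1$), I would obtain (\ref{eq-goco-t+1}) with $\kappa_{0}=(1-p)(1+\tfrac{2}{\gamma\rho})G^{2}w_{2}$ after one last conditional expectation.

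The main obstacle is the consensus-contraction step. In the straggler-free case one uses $\big(\mathbf{I}+\gamma(\mathbf{W}-\mathbf{I})\big)(\mathbf{I}-\mathbf{J})=\mathbf{I}+\gamma(\mathbf{W}-\mathbf{I})-\mathbf{J}$ and applies the spectral gap of the symmetric stochastic matrix $\mathbf{W}$ directly; here the effective mixing operator $\mathbf{I}+\gamma(\mathbf{W}-\mathbf{I})\mathcal{I}^{t}$ is neither symmetric nor stochastic, because a straggler does not reciprocate its neighbours' updates, so the diagonal mask $\mathcal{I}^{t}$ that sits between $\mathbf{W}-\mathbf{I}$ and $\mathbf{I}-\mathbf{J}$ has to be dealt with by controlling the cross terms in $\mathbb{E}[\mathbf{M}^{t}(\mathbf{M}^{t})^{T}]$ and bounding its spectral norm on $\mathbf{1}^{\perp}$ by $w_{1}$; this is also where the quantitative bound on $\gamma$ ensuring $0<w_{1}<1$ is pinned down. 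A secondary, purely mechanical point is pushing the pairwise-balanced identities through the second-moment bound so that the double sum collapses exactly to $w_{2}$.
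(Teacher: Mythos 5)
Your matrix recursion $\mathbf{X}^{t+1}-\bar{\mathbf{X}}^{t+1}=(\mathbf{X}^{t}-\bar{\mathbf{X}}^{t})\mathbf{M}^{t}-\eta\,\mathbf{G}^{t}\mathcal{I}^{t}(\mathbf{I}-\mathbf{J})$ and your treatment of the gradient term are correct (your Cauchy--Schwarz route gives $G^2\sum_i a_i^2\le G^2 w_2$, consistent with the stated constant). The genuine gap is the consensus-contraction step, which you flag as the main obstacle but leave unproved, and which in fact cannot yield the stated constants along your route. Because you project onto the \emph{new} average, the straggler columns no longer stay put (all columns are coupled through $\mathbf{J}$), so your matrix-level Young split makes the factor $(1+\alpha)$ multiply the whole term $\|(\mathbf{X}^{t}-\bar{\mathbf{X}}^{t})\mathbf{M}^{t}\|_F^2$, including the straggler part of $\mathbf{M}^{t}$; matching $\kappa_0=(1-p)\bigl(1+\tfrac{2}{\gamma\rho}\bigr)G^2w_2$ forces $\alpha\ge\tfrac{\gamma\rho}{2}$. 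Carrying out the expectation you propose, on $\mathbf{1}^{\perp}$ at a $\mathbf{W}$-eigenvalue $\lambda$ one gets $\mathbb{E}[\mathbf{M}^{t}(\mathbf{M}^{t})^{T}]$ acting as $[1-\gamma(1-p)(1-\lambda)]^2+\gamma^2p(1-p)\bigl(1-\tfrac1n\bigr)(1-\lambda)^2$, so your overall factor is at least $\bigl(1+\tfrac{\gamma\rho}{2}\bigr)[1-\gamma(1-p)\rho]^2\approx 1-\bigl(2(1-p)-\tfrac12\bigr)\gamma\rho$, whereas the claimed $w_1=(1-p)\bigl(1-\tfrac{\gamma\rho}{2}\bigr)^2+p\approx 1-(1-p)\gamma\rho$. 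The needed inequality fails to first order in $\gamma$ whenever $p>\tfrac12$, and for $p$ close to $1$ your factor even exceeds $1$ while $w_1<1$; the lemma, however, is claimed (and proved in the paper) for all $p$. The phrase in your plan about applying $(1+\alpha)$ only ``on the participating part'' presumes a branch separation that your decomposition does not provide.

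The paper obtains exactly that separation by a different first step: it bounds $\|\mathbf{X}^{t+1}-\bar{\mathbf{X}}^{t+1}\|_F^2\le\|\mathbf{X}^{t+1}-\bar{\mathbf{X}}^{t}\|_F^2$, i.e., it subtracts the \emph{old} average (legitimate since the column mean minimizes the Frobenius distance over matrices with identical columns). With the old average the columns remain decoupled, and since the $I_i^t$ are independent Bernoulli masks acting columnwise, the conditional expectation splits \emph{exactly} into $(1-p)$ times the all-active term plus $p\,\|\mathbf{X}^{t}-\bar{\mathbf{X}}^{t}\|_F^2$, the straggler branch entering with weight exactly $p$ untouched by any Young factor. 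Young's inequality with $\varpi=\tfrac{\gamma\rho}{2}$ is then applied only inside the $(1-p)$ branch, the deterministic gossip contraction $\|\mathbf{X}^{t}-\bar{\mathbf{X}}^{t}+\gamma\mathbf{X}^{t}(\mathbf{W}-\mathbf{I})\|_F^2\le(1-\rho\gamma)^2\|\mathbf{X}^{t}-\bar{\mathbf{X}}^{t}\|_F^2$ (Lemma 17 of \cite{koloskova2019decentralized}) is invoked there, and the $G^2w_2$ bound on $\mathbb{E}\|\mathbf{G}^{t}\|_F^2$ follows from the pair-wise balanced identities as in your sketch. This relaxation to the old mean before splitting over the straggler events is the missing idea; without it, your spectral-norm bound on $\mathbb{E}[\mathbf{M}^{t}(\mathbf{M}^{t})^{T}]$ cannot reach the claimed $(w_1,\kappa_0)$ pair.
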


\begin{proof}
    Please see Appendix \ref{appendix GOCO-divia1}.
\end{proof}

According to the above lemmas, we state the convergence performance of GOCO in the following theorem.  
\begin{theorem}
\label{theorem convergence_GOCO}
Under Assumptions \ref{assumption-loss-function}-\ref{subsets variation} and ${\left\| {{{\mathbf{X}}^0} - {{{\mathbf{\bar X}}}^0}} \right\|_F^2}=0$, with constant learning rate \(\eta  = \frac{{{\lambda _0}}}{{\sqrt T }}, \lambda _0>0\), for \(\mu  \gg \frac{{{\gamma ^2}{\kappa _0}{\beta ^2}}}{{m{a_{\min }}\left( {1 - p} \right)\left( {1 - {w_1}} \right)}}\), GOCO converges as
\begin{align}
     \label{goco-th10}
   \frac{1}{T}\sum\limits_{t = 0}^{T - 1} {\mathbb{E}\left[ {f\left( {{{{\mathbf{\bar x}}}^t}} \right) - f\left( {{{\mathbf{x}}^*}} \right)} \right]}  \approx  {\frac{{{\phi _1^1}}}{{\sqrt T }} + \frac{{{\phi _2^1}}}{T} + \frac{{{\phi _3^1}}}{{T\sqrt T }}} ,
\end{align}
  where
  \begin{align}
      \label{goco-th11}
  {\phi _1^1} =& \frac{n}{{2m{\lambda _0}\left( {1 - p} \right)}}{\left\| {{{{\mathbf{\bar x}}}^0} - {{\mathbf{x}}^*}} \right\|^2} \nonumber \\
   &+ \frac{{2{\lambda _0}}}{{mn}}p{C^2}\sum\limits_{k = 1}^m {\left( {\frac{1}{{{d_k}}} - \frac{1}{n}} \right)}  + \frac{{ {\lambda _0}}}{{2m}}\sum\limits_{k = 1}^m {\frac{{\sigma _k^2}}{{{d_k}}}}, \\
      \label{goco-th12}
  {\phi _2^1} =& \frac{{{q_{2,2}}\lambda _0^2}}{{1 - p}}\frac{{{\kappa _0}}}{{1 - {w_1}}}\frac{n}{{2m}},\\
      \label{goco-th13}
{\phi _3^1} =& \frac{{{q_{2,1}}\lambda _0^3}}{{1 - p}}\frac{{{\kappa _0}}}{{1 - {w_1}}}\frac{n}{{2m}}.
      \end{align}
\end{theorem}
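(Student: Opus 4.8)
The plan is to chain the three lemmas in the natural order: convert the one-step consensus recursion of Lemma~\ref{GOCO-divia1} into a uniform-in-$t$ bound, feed it into Lemma~\ref{GOCO-devia-optimal} (which already absorbs Lemma~\ref{xt-yitaIG-x*}), and telescope the resulting scalar recursion for the optimality gap. For the first step, taking total expectation in (\ref{eq-goco-t+1}) and using the tower property gives $\mathbb{E}\big\|{{\mathbf{X}}^{t+1}}-{{{\mathbf{\bar X}}}^{t+1}}\big\|_F^2 \le {w_1}\,\mathbb{E}\big\|{{\mathbf{X}}^{t}}-{{{\mathbf{\bar X}}}^{t}}\big\|_F^2 + {\kappa _0}{\eta ^2}$ with $0<{w_1}<1$; since $\|{{\mathbf{X}}^0}-{{{\mathbf{\bar X}}}^0}\|_F^2=0$, unrolling this geometric recursion yields $\mathbb{E}\big\|{{\mathbf{X}}^{t}}-{{{\mathbf{\bar X}}}^{t}}\big\|_F^2 \le {{\kappa _0}{\eta ^2}}/({1-{w_1}})=:D$ for every $t\ge 0$.

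\textbf{Step 2 (one-step descent for the optimality gap).} Put $e_t:=\mathbb{E}\|{{{\mathbf{\bar x}}}^t}-{{\mathbf{x}}^*}\|^2$ and $\delta_t:=\mathbb{E}[f({{{\mathbf{\bar x}}}^t})-f({{\mathbf{x}}^*})]\ge 0$. Take total expectation in (\ref{overall1}), substitute $D$ into both consensus-error terms, and pick $\Delta=(1-q_1)/q_1>0$, which is admissible for $\eta$ small since $q_1=1-{q_{1,1}}\eta+{q_{1,2}}{\eta^2}<1$ and satisfies $\Delta\approx{q_{1,1}}\eta=\tfrac{\mu(1-p){a_{\min}}}{2}\eta$. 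Then $(1+\Delta)q_1=1$, so moving the negative $\delta_t$ term to the left gives
\[
2\eta\tfrac{m}{n}(1-p)\,\delta_t \;\le\; \tfrac{1}{1+\Delta}\,(e_t-e_{t+1}) \;+\; \tfrac{R}{1+\Delta},
\]
where $R$ is the sum of the four non-telescoping terms of (\ref{overall1}) — those carrying ${q_2}$, ${C^2}$, $\sigma_k^2$ and ${\beta^2}$ — with $\|{{\mathbf{X}}^t}-{{{\mathbf{\bar X}}}^t}\|_F^2$ replaced by $D$, hence independent of $t$.

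\textbf{Step 3 (telescope and insert the step size).} Summing over $t=0,\dots,T-1$, using $-e_T\le 0$, and dividing by $2\eta\tfrac{m}{n}(1-p)T$ gives
\[
\frac1T\sum_{t=0}^{T-1}\delta_t \;\le\; \frac{n\,e_0}{2\eta m(1-p)(1+\Delta)T} + \frac{nR}{2\eta m(1-p)(1+\Delta)}.
\]
Now substitute $\eta=\lambda_0/\sqrt T$ (so $1/(\eta T)=1/(\lambda_0\sqrt T)$) together with $D={\kappa_0}{\eta^2}/(1-{w_1})$, $q_2={q_{2,1}}{\eta^2}+{q_{2,2}}\eta$, $1/\Delta\approx 2/(\mu(1-p){a_{\min}}\eta)$, and $1+\Delta\approx 1$, and collect by powers of $1/\sqrt T$: the initial-gap term, the ${C^2}$ term and the $\sigma_k^2$ term are each $\propto 1/\sqrt T$ and combine into $\phi_1^1$ of (\ref{goco-th11}); the ${q_{2,2}}\eta$ part of ${q_2}D$ is $\propto 1/T$ and gives $\phi_2^1$ of (\ref{goco-th12}); the ${q_{2,1}}{\eta^2}$ part of ${q_2}D$ is $\propto 1/(T\sqrt T)$ and gives $\phi_3^1$ of (\ref{goco-th13}). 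The one leftover contribution — the ${\beta^2}$ term carrying the factor $1+1/\Delta$ — reduces after the division to the $T$-independent constant $\tfrac{{\gamma^2}{\beta^2}{\kappa_0}}{\mu m{a_{\min}}(1-p)(1-{w_1})}$, which the hypothesis $\mu\gg\tfrac{{\gamma^2}{\kappa_0}{\beta^2}}{m{a_{\min}}(1-p)(1-{w_1})}$ makes $\ll 1$; dropping it is exactly what the ``$\approx$'' in (\ref{goco-th10}) records.

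\textbf{Expected main obstacle.} The delicate point is the choice of $\Delta$: it must be small enough that $(1+\Delta)q_1\le 1$ so the $e_t$-recursion telescopes cleanly, yet the amplification $1+1/\Delta$ of the gossip-error term forbids taking it too small, forcing $\Delta=\Theta(\eta)$ and hence an $O(1)$ residual that does not vanish as $T\to\infty$. Showing this residual is governed by the lower bound on $\mu$ rather than by $\eta$ is the heart of the argument and the reason the rate is stated approximately; the rest — checking $q_1<1$, $0<w_1<1$ and $\Delta>0$ for $T$ large, and matching each surviving term to $\phi_1^1,\phi_2^1,\phi_3^1$ — is routine bookkeeping.
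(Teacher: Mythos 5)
Your proposal is correct and follows essentially the same route as the paper: bound the consensus error via Lemma~\ref{GOCO-divia1} (the paper unrolls the geometric recursion exactly with the $w_1^t$ terms rather than using your uniform cap $\kappa_0\eta^2/(1-w_1)$, which changes nothing in the final constants), choose $\Delta = 1/q_1 - 1$ so that $(1+\Delta)q_1 = 1$, telescope the resulting recursion for $\mathbb{E}\|\bar{\mathbf{x}}^t-\mathbf{x}^*\|^2$, substitute $\eta=\lambda_0/\sqrt{T}$, and drop the $O(1)$ residual $\frac{\gamma^2\beta^2\kappa_0}{\mu m a_{\min}(1-p)(1-w_1)}$ using the largeness condition on $\mu$, exactly as in (\ref{goco-th6}). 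Your identification of $\phi_1^1$, $\phi_2^1$, $\phi_3^1$ and of the role of the $\mu\gg$ condition matches the paper's derivation.
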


\begin{proof}
    According to Lemma \ref{GOCO-devia-optimal} and Lemma \ref{GOCO-divia1}, we can take expectation on both sides of (\ref{overall1}) to obtain
\begin{align}
    \label{gocothth}
  \mathbb{E}&\left[ {{{\left\| {{{{\mathbf{\bar x}}}^{t + 1}} - {{\mathbf{x}}^*}} \right\|}^2}} \right] \nonumber \\
   \leq& \left( {1 + \Delta } \right){q_1}\mathbb{E}\left[ {{{\left\| {{{{\mathbf{\bar x}}}^t} - {{\mathbf{x}}^*}} \right\|}^2}} \right] \nonumber \\
  & + \left( {1 + \Delta } \right)\frac{{4{\eta ^2}}}{{{n^2}}}\left( {p - {p^2}} \right){C^2}\sum\limits_{k = 1}^m {\left( {\frac{1}{{{d_k}}} - \frac{1}{n}} \right)}  \nonumber \\
  & + \left( {1 + \Delta } \right)2\eta \frac{m}{n}\left( {1 - p} \right)\mathbb{E}\left[ {f\left( {{{\mathbf{x}}^*}} \right) - f\left( {{{{\mathbf{\bar x}}}^t}} \right)} \right] \nonumber \\
  & + \left( {1 + \Delta } \right)\frac{{{\eta ^2}}}{n}\left( {1 - p} \right)\sum\limits_{k = 1}^m {\frac{{\sigma _k^2}}{{{d_k}}}}  \nonumber\\
  & + \left[ {\left( {1 + \frac{1}{\Delta }} \right)\frac{{{\gamma ^2}}}{n}\left( {1 - p} \right){\beta ^2} + \left( {1 + \Delta } \right){q_2}} \right]\frac{{{\kappa _0}{\eta ^2}}}{{{w_1} - 1}}w_1^t \nonumber \\
  & - \left[ {\left( {1 + \frac{1}{\Delta }} \right)\frac{{{\gamma ^2}}}{n}\left( {1 - p} \right){\beta ^2} + \left( {1 + \Delta } \right){q_2}} \right]\frac{{{\kappa _0}{\eta ^2}}}{{{w_1} - 1}}.
\end{align}
For $T > {\left[ {\frac{{{\lambda _0}}}{{\frac{{{q_{1,1}} - \sqrt {\max \left\{ {q_{1,1}^2 - 4{q_{1,2}},0} \right\}} }}{{2{q_{1,2}}}}}}} \right]^2}$, if we set $\eta  = \frac{{{\lambda _0}}}{{\sqrt T }}, \lambda _0>0$, it holds that $0 < 1 - {q_{1,1}}\eta {\text{ }} + {q_{1,2}}{\eta ^2} < 1$. Based on that, let us set 
  \begin{align}
      \label{goco-th3}
 \Delta  = \frac{1}{{{q_1}}} - 1 = \frac{{{q_{1,1}}\eta  - {q_{1,2}}{\eta ^2}}}{{1 - {q_{1,1}}\eta  + {q_{1,2}}{\eta ^2}}}>0.
  \end{align}  
  Combing (\ref{gocothth}) and (\ref{goco-th3}) and summing over $T$ iterations yield 
  \begin{align}
      \label{goco-th6}
  \frac{1}{T}&\sum\limits_{t = 0}^{T - 1} {\mathbb{E}\left[ {f\left( {{{{\mathbf{\bar x}}}^t}} \right) - f\left( {{{\mathbf{x}}^*}} \right)} \right]}  \nonumber \\
   \leq& \frac{n}{{2m\sqrt T {\lambda _0}\left( {1 - p} \right)}}{\left\| {{{{\mathbf{\bar x}}}^0} - {{\mathbf{x}}^*}} \right\|^2} \nonumber \\
  & + \frac{{2{\lambda _0}}}{{mn\sqrt T }}p{C^2}\sum\limits_{k = 1}^m {\left( {\frac{1}{{{d_k}}} - \frac{1}{n}} \right)}   + \frac{{ {\lambda _0}}}{{2m\sqrt T }}\sum\limits_{k = 1}^m {\frac{{\sigma _k^2}}{{{d_k}}}}  \nonumber \\
  & + \frac{{{\beta ^2}}}{{{q_{1,1}} - {q_{1,2}}\frac{{{\lambda _0}}}{{\sqrt T }}}}\frac{{{\gamma ^2}}}{2m}\frac{{{\kappa _0}}}{{1 - {w_1}}} \nonumber \\
  & + \frac{{{q_{2,1}}{{\left( {\frac{{{\lambda _0}}}{{\sqrt T }}} \right)}^3} + {q_{2,2}}{{\left( {\frac{{{\lambda _0}}}{{\sqrt T }}} \right)}^2}}}{{1 - p}}\frac{{{n\kappa _0}}}{{2m(1 - {w_1})}}.
  \end{align} 
For \(\mu  \gg \frac{{{\gamma ^2}{\kappa _0}{\beta ^2}}}{{m{a_{\min }}\left( {1 - p} \right)\left( {1 - {w_1}} \right)}}\), we can easily derive Theorem \ref{theorem convergence_GOCO} from (\ref{goco-th6}). 
\end{proof}
\begin{remark}
In Theorem \ref{theorem convergence_GOCO}, the condition of fairly large \(\mu\) is adopted, which has substantial practical relevance. For instance, with loss functions containing quadratic penalty terms, strong convexity of the loss functions is enhanced naturally. By selecting a high coefficient for the penalty terms, the above condition can be reasonably achieved. 
\end{remark}

\section{Simulations}
\label{simulations}
In this section, we test the performance of GOCO on a linear regression problem with loss function expressed as $f\left( {\mathbf{x}} \right)= \frac{1}{m}\sum\limits_{k = 1}^m {{f_k}\left( {\mathbf{x}} \right)}$, where ${f_k}\left( {\mathbf{x}} \right) = \frac{1}{2}{\left( {\left\langle {{\mathbf{x}},{{\mathbf{z}}_k}} \right\rangle  - {y_k}} \right)^2}$, \({{\mathbf{z}}_k} \in {\mathbb{R}^{100}}\), \(m = 16\), \(y_k \in {\mathbb{R}}\), $\forall k$ and \(\mathbf{x} \in {\mathbb{R}^{100}}\) is the parameter vector. Here, we generate the elements in \(\left\{ {{{\mathbf{z}}_1},...,{{\mathbf{z}}_n}} \right\}\) as independent random variables using normal distribution \(\mathcal{N}\left( {0,100} \right)\). With a random vector \({\mathbf{\overset{\lower0.5em\hbox{$\smash{\scriptscriptstyle\frown}$}}{x} }}\) containing 100 integers randomly selected between 1 and 10, we set $y_k\sim\mathcal{N}\left( {\left\langle {{{\mathbf{z}}_k},{\mathbf{\overset{\lower0.5em\hbox{$\smash{\scriptscriptstyle\frown}$}}{x} }}} \right\rangle ,1} \right)$, $\forall k$. The DEL system contains \(n = 16\) devices and they formulate a ring communication graph. To distribute the training data, as an approximation of the pair-wise balanced manner, the training subsets associated with \(\left\{ f_k(\mathbf{x}), \forall k \right\}\) are distributed uniformly at random by selecting a number of devices to hold each training subset. For our simulations, stochastic gradients are generated directly based on the expression of \( f_k(\mathbf{x}) \) and \(\nabla F_k(\mathbf{x}, \xi_k)\) follows a normal distribution \(\mathcal{N}\left( \nabla f_k(\mathbf{x}), \sigma_0^2\mathbf{I} \right)\), \(\forall \mathbf{x} \in \mathbb{R}^{100}\), \(k \in [m]\), where \(\mathbf{I}\) is the identity matrix, and \(\sigma_0 = 1\). We fix $p=0.2$, \({d_k} = 3,k \in [m]\), $\gamma=0.05$ and $\eta=0.0001$. We initialize the local parameter vectors to be all-zero vectors. 

To test the learning performance of GOCO, in Fig. \ref{fig: ring uncom}, the training loss of different methods is depicted as a function of the number of transmitted bits, where transmitting each element uses 64 bits. The two baseline methods are GOCO on all connected graph and Ignore Stragglers method (IS). The former assumes that all the devices can communicate with all the others, which is equivalent to \cite{bitar2020stochastic}. The latter distributes the training subsets without redundancy. From Fig. \ref{fig: ring uncom}, We can observe that GOCO achieves superior learning performance compared with the baseline methods, given the same level of communication overhead. This effectively demonstrates the value of our work.

\begin{figure}[b]

    \centering
        \includegraphics[width=0.8\linewidth]{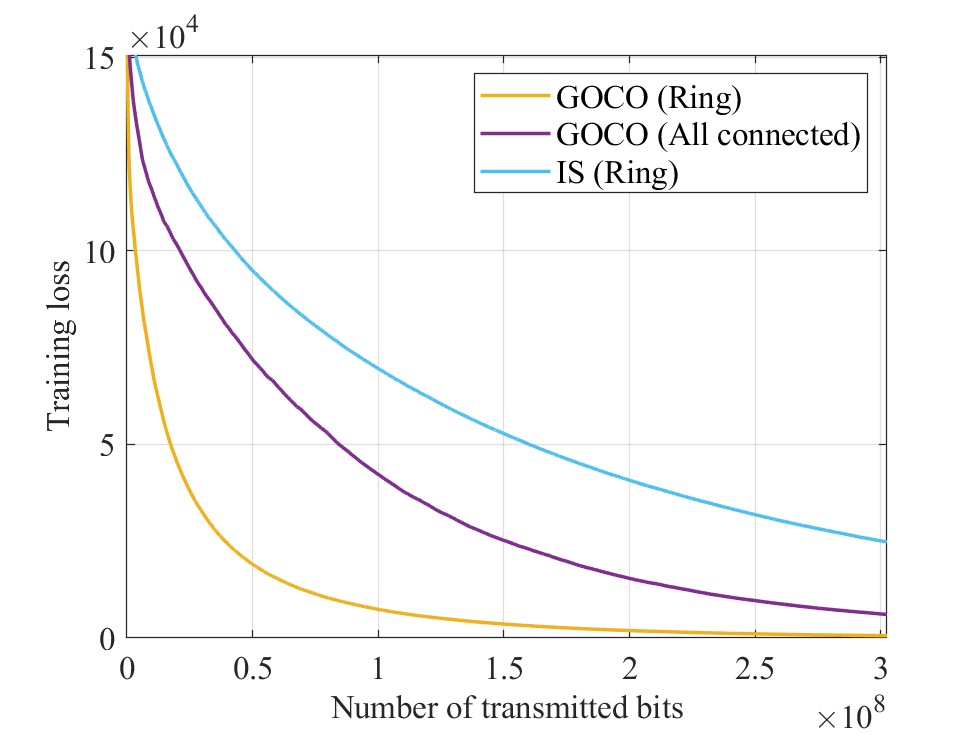}
        \caption{Training loss as a function of the number of transmitted bits of different methods.}
        \label{fig: ring uncom}
\end{figure}

\section{Conclusions}
\label{conclusions}
In this paper, the DEL problem in the presence of stragglers was addressed. To overcome the limitation of the existing GC techniques that they can not be applied under decentralized scenarios, we proposed GOCO by combining the advantages of SGC and gossip-based approaches to evade stragglers. The convergence performance of GOCO was analytically described for strongly convex loss functions and the superiority of GOCO was demonstrated via simulation results. In the future, we plan to further reduce the communication overhead of the proposed method by incorporating communication compression techniques.



\bibliographystyle{IEEEtran}   
\bibliography{reference} 
\clearpage
\appendices

\section{Proof of Lemma \ref{xt-yitaIG-x*}}
  \label{appendix xt-yitaIG-x*}

For the proposed methods, based on (\ref{g}), we can derive (\ref{a1_1}) by using simple expansions. 
\begin{figure*} 
\begin{align}
    \label{a1_1}
  \mathbb{E}&\left[ {\left. {{{\left\| {{{{\mathbf{\bar x}}}^t} - \frac{\eta }{n}\sum\limits_{i = 1}^n {I_i^t} {\mathbf{g}}_i^t - {{\mathbf{x}}^*}} \right\|}^2}} \right|{\Im _{t - 1}}} \right] \nonumber \\
   =& \mathbb{E}\left[ {\left. {{{\left\| {{{{\mathbf{\bar x}}}^t} - \frac{\eta }{n}\sum\limits_{i = 1}^n {I_i^t} \sum\limits_{k = 1}^m {\frac{1}{{{d_k}}}\nabla {F_k}({\mathbf{x}}_i^t,\xi _{i,k}^t){s_{i,k}}}  - {{\mathbf{x}}^*}} \right\|}^2}} \right|{\Im _{t - 1}}} \right] \nonumber \\
   = &\mathbb{E}\left[ {\left. {{{\left\| {{{{\mathbf{\bar x}}}^t} - {{\mathbf{x}}^*} - \frac{\eta }{n}\sum\limits_{i = 1}^n {I_i^t} \sum\limits_{k = 1}^m {\frac{1}{{{d_k}}}\nabla {f_k}({\mathbf{x}}_i^t){s_{i,k}}}  + \frac{\eta }{n}\sum\limits_{i = 1}^n {I_i^t} \sum\limits_{k = 1}^m {\frac{1}{{{d_k}}}\nabla {f_k}({\mathbf{x}}_i^t){s_{i,k}}}  - \frac{\eta }{n}\sum\limits_{i = 1}^n {I_i^t} \sum\limits_{k = 1}^m {\frac{1}{{{d_k}}}\nabla {F_k}({\mathbf{x}}_i^t,\xi _{i,k}^t){s_{i,k}}} } \right\|}^2}} \right|{\Im _{t - 1}}} \right] \nonumber \\
   = &\mathbb{E}\left[ {\left. {{{\left\| {{{{\mathbf{\bar x}}}^t} - {{\mathbf{x}}^*} - \frac{\eta }{n}\sum\limits_{i = 1}^n {I_i^t} \sum\limits_{k = 1}^m {\frac{1}{{{d_k}}}\nabla {f_k}({\mathbf{x}}_i^t){s_{i,k}}} } \right\|}^2}} \right|{\Im _{t - 1}}} \right] \nonumber \\
  & + \frac{{{\eta ^2}}}{{{n^2}}}\mathbb{E}\left[ {\left. {{{\left\| {\sum\limits_{i = 1}^n {I_i^t} \sum\limits_{k = 1}^m {\frac{1}{{{d_k}}}\nabla {f_k}({\mathbf{x}}_i^t){s_{i,k}}}  - \sum\limits_{i = 1}^n {I_i^t} \sum\limits_{k = 1}^m {\frac{1}{{{d_k}}}\nabla {F_k}({\mathbf{x}}_i^t,\xi _{i,k}^t){s_{i,k}}} } \right\|}^2}} \right|{\Im _{t - 1}}} \right] \nonumber \\
 &+ \frac{{2\eta }}{n}\mathbb{E}\left[ {\left. {\left\langle {{{{\mathbf{\bar x}}}^t} - {{\mathbf{x}}^*} - \frac{\eta }{n}\sum\limits_{i = 1}^n {I_i^t} \sum\limits_{k = 1}^m {\frac{1}{{{d_k}}}\nabla {f_k}({\mathbf{x}}_i^t){s_{i,k}}} ,\sum\limits_{i = 1}^n {I_i^t} \sum\limits_{k = 1}^m {\frac{1}{{{d_k}}}\nabla {f_k}({\mathbf{x}}_i^t){s_{i,k}}}  - \sum\limits_{i = 1}^n {I_i^t} \sum\limits_{k = 1}^m {\frac{1}{{{d_k}}}\nabla {F_k}({\mathbf{x}}_i^t,\xi _{i,k}^t){s_{i,k}}} } \right\rangle } \right|{\Im _{t - 1}}} \right]. 
\end{align}
\end{figure*}
According to (\ref{unbias}), the last term in (\ref{a1_1}) equals zero by noting that 
\begin{align}
\label{t3}
  &\mathbb{E}\left[ {\left. {\sum\limits_{k = 1}^m {\frac{1}{{{d_k}}}\nabla {f_k}({\mathbf{x}}_i^t){s_{i,k}}}  - \sum\limits_{k = 1}^m {\frac{1}{{{d_k}}}\nabla {F_k}({\mathbf{x}}_i^t,\xi _{i,k}^t){s_{i,k}}} } \right|{\Im _{t - 1}}} \right] \nonumber \\
  & = \sum\limits_{k = 1}^m {\frac{{{s_{i,k}}}}{{{d_k}}}\mathbb{E}\left[ {\left. {\nabla {f_k}({\mathbf{x}}_i^t) - \nabla {F_k}({\mathbf{x}}_i^t,\xi _{i,k}^t)} \right|{\Im _{t - 1}}} \right]}  \nonumber \\
  & = 0, 
\end{align}
and that sampling training data for the computation of stochastic gradients and the straggler behaviour of the devices are independent. According to (\ref{Iit_bernoulli}) and (\ref{boundvar1}), the second term in (\ref{a1_1}) can be rewritten as (\ref{a1_1_2}), where the following basic inequality is employed:
 \begin{align}
    \label{ineq2}
    \left\| \sum_{i=1}^{n} \mathbf{a}_i \right\|^2 \leq n \sum_{i=1}^{n} \left\| \mathbf{a}_i \right\|^2, &\forall\mathbf{a}_i \in \mathbb{R}^d.
    \end{align}
and \(\sum\limits_{i = 1}^n {{s_{i,k}}}  = {d_k}\) is noted. 
    
\begin{figure*}
\begin{align}
    \label{a1_1_2}
&  \frac{{{\eta ^2}}}{{{n^2}}}\mathbb{E}\left[ {\left. {{{\left\| {\sum\limits_{i = 1}^n {I_i^t} \sum\limits_{k = 1}^m {\frac{1}{{{d_k}}}\nabla {f_k}({\mathbf{x}}_i^t){s_{i,k}}}  - \sum\limits_{i = 1}^n {I_i^t} \sum\limits_{k = 1}^m {\frac{1}{{{d_k}}}\nabla {F_k}({\mathbf{x}}_i^t,\xi _{i,k}^t){s_{i,k}}} } \right\|}^2}} \right|{\Im _{t - 1}}} \right] \nonumber \\
 &  = \frac{{{\eta ^2}}}{{{n^2}}}\mathbb{E}\left[ {\left. {{{\left\| {\sum\limits_{i = 1}^n {I_i^t} \sum\limits_{k = 1}^m {\frac{{{s_{i,k}}}}{{{d_k}}}\left( {\nabla {f_k}\left( {{\mathbf{x}}_i^t} \right) - \nabla {F_k}\left( {{\mathbf{x}}_i^t,\xi _{i,k}^t} \right)} \right)} } \right\|}^2}} \right|{\Im _{t - 1}}} \right] \nonumber \\
 &  \leq \frac{{{\eta ^2}}}{{{n}}}\sum\limits_{i = 1}^n {\mathbb{E}\left[ {\left. {I_i^t{{\left\| {\sum\limits_{k = 1}^m {\frac{{{s_{i,k}}}}{{{d_k}}}\left( {\nabla {f_k}\left( {{\mathbf{x}}_i^t} \right) - \nabla {F_k}\left( {{\mathbf{x}}_i^t,\xi _{i,k}^t} \right)} \right)} } \right\|}^2}} \right|{\Im _{t - 1}}} \right]}  \nonumber \\
 &  = \frac{{{\eta ^2}}}{{{n}}}\sum\limits_{i = 1}^n {\mathbb{E}\left[ {\left. {{{\left\| {\sum\limits_{k = 1}^m {\frac{{{s_{i,k}}}}{{{d_k}}}\left( {\nabla {f_k}\left( {{\mathbf{x}}_i^t} \right) - \nabla {F_k}\left( {{\mathbf{x}}_i^t,\xi _{i,k}^t} \right)} \right)} } \right\|}^2}} \right|{\Im _{t - 1}}} \right]\mathbb{E}\left( {\left. {I_i^t} \right|{\Im _{t - 1}}} \right)}  \nonumber \\
 &  = \frac{{{\eta ^2}}}{{{n}}}\left( {1 - p} \right)\sum\limits_{i = 1}^n {\mathbb{E}\left[ {\left. {{{\left\| {\sum\limits_{k = 1}^m {\frac{{{s_{i,k}}}}{{{d_k}}}\left( {\nabla {f_k}\left( {{\mathbf{x}}_i^t} \right) - \nabla {F_k}\left( {{\mathbf{x}}_i^t,\xi _{i,k}^t} \right)} \right)} } \right\|}^2}} \right|{\Im _{t - 1}}} \right]} 
 \leq \frac{{{\eta ^2}}}{{{n}}}\left( {1 - p} \right)\sum\limits_{i = 1}^n {\sum\limits_{k = 1}^m {\frac{{{s_{i,k}}}}{{d_k^2}}\sigma _k^2} } 
 = \frac{{{\eta ^2}}}{{{n}}}\left( {1 - p} \right)\sum\limits_{k = 1}^m {\frac{{\sigma _k^2}}{{{d_k}}}}.
\end{align}
\end{figure*}
The first term in (\ref{a1_1}) can be rewritten as
\begin{align}
    \label{a1_1_de}
  \mathbb{E}&\left[ {\left. {{{\left\| {{{{\mathbf{\bar x}}}^t} - {{\mathbf{x}}^*} - \frac{\eta }{n}\sum\limits_{i = 1}^n {I_i^t} \sum\limits_{k = 1}^m {\frac{1}{{{d_k}}}\nabla {f_k}({\mathbf{x}}_i^t){s_{i,k}}} } \right\|}^2}} \right|{\Im _{t - 1}}} \right] \nonumber \\
   =&\mathbb{E}\left( {\left. {{{\left\| {{{{\mathbf{\bar x}}}^t} - {{\mathbf{x}}^*}} \right\|}^2}} \right|{\Im _{t - 1}}} \right) \nonumber \\
  &+ \mathbb{E}\left[ {\left. {{{\left\| {\frac{\eta }{n}\sum\limits_{i = 1}^n {I_i^t} \sum\limits_{k = 1}^m {\frac{1}{{{d_k}}}\nabla {f_k}({\mathbf{x}}_i^t){s_{i,k}}} } \right\|}^2}} \right|{\Im _{t - 1}}} \right] \nonumber \\
& - \frac{{2\eta }}{n}\mathbb{E}\left[ {\left. {\left\langle {{{{\mathbf{\bar x}}}^t} - {{\mathbf{x}}^*},\sum\limits_{i = 1}^n {I_i^t} \sum\limits_{k = 1}^m {\frac{1}{{{d_k}}}\nabla {f_k}({\mathbf{x}}_i^t){s_{i,k}}} } \right\rangle } \right|{\Im _{t - 1}}} \right].
\end{align}
In (\ref{a1_1_de}), for the second term, we have
\begin{align}
    \label{a11de2}
  \mathbb{E}&\left[ {\left. {{{\left\| {\frac{\eta }{n}\sum\limits_{i = 1}^n {I_i^t} \sum\limits_{k = 1}^m {\frac{1}{{{d_k}}}\nabla {f_k}({\mathbf{x}}_i^t){s_{i,k}}} } \right\|}^2}} \right|{\Im _{t - 1}}} \right] \nonumber \\
   = &\frac{{{\eta ^2}}}{{{n^2}}}\mathbb{E}\left[ {\left\| {\sum\limits_{i = 1}^n {I_i^t} \sum\limits_{k = 1}^m {\frac{{{s_{i,k}}}}{{{d_k}}}\left[ {\nabla {f_k}({\mathbf{x}}_i^t) - \nabla {f_k}({{{\mathbf{\bar x}}}^t})} \right.} } \right.} \right. \nonumber \\
  &\left. {\left. {{{\left. {\left. { + \nabla {f_k}({{{\mathbf{\bar x}}}^t}) - \nabla f({{\mathbf{x}}^*})} \right]} \right\|}^2}} \right|{\Im _{t - 1}}} \right] \nonumber \\
   \leq &\frac{{2{\eta ^2}}}{{{n^2}}}\mathbb{E}\left[ {\left. {{{\left\| {\sum\limits_{i = 1}^n {I_i^t} \sum\limits_{k = 1}^m {\frac{{{s_{i,k}}}}{{{d_k}}}\left[ {\nabla {f_k}({\mathbf{x}}_i^t) - \nabla {f_k}({{{\mathbf{\bar x}}}^t})} \right]} } \right\|}^2}} \right|{\Im _{t - 1}}} \right] \nonumber \\
  & + \frac{{2{\eta ^2}}}{{{n^2}}}\mathbb{E}\left[ {\left. {{{\left\| {\sum\limits_{i = 1}^n {I_i^t} \sum\limits_{k = 1}^m {\frac{{{s_{i,k}}}}{{{d_k}}}\left[ {\nabla {f_k}({{{\mathbf{\bar x}}}^t}) - \nabla f({{\mathbf{x}}^*})} \right]} } \right\|}^2}} \right|{\Im _{t - 1}}} \right], 
\end{align}
where the equality holds since $\nabla f({{\mathbf{x}}^*}) = 0$ and the inequality is derived from (\ref{ineq2}). For the second term in (\ref{a11de2}), we can provide the bound given as (\ref{a1_1_3}),
\begin{figure*}
\begin{align}
    \label{a1_1_3}
  &\frac{{2{\eta ^2}}}{{{n^2}}}\mathbb{E}\left[ {\left. {{{\left\| {\sum\limits_{i = 1}^n {I_i^t} \sum\limits_{k = 1}^m {\frac{{{s_{i,k}}}}{{{d_k}}}\left[ {\nabla {f_k}({{{\mathbf{\bar x}}}^t}) - \nabla f({{\mathbf{x}}^*})} \right]} } \right\|}^2}} \right|{\Im _{t - 1}}} \right] \nonumber \\
   = &\frac{{2{\eta ^2}}}{{{n^2}}}\mathbb{E}\left[ {\left. {\sum\limits_{i = 1}^n {I_i^t} \sum\limits_{{k_1} = 1}^m {\sum\limits_{{k_2} = 1}^m {\left\langle {\frac{{{s_{i,{k_1}}}}}{{{d_{{k_1}}}}}\left[ {\nabla {f_{{k_1}}}({{{\mathbf{\bar x}}}^t}) - \nabla f({{\mathbf{x}}^*})} \right],\frac{{{s_{i,{k_2}}}}}{{{d_{{k_2}}}}}\left[ {\nabla {f_{{k_2}}}({{{\mathbf{\bar x}}}^t}) - \nabla f({{\mathbf{x}}^*})} \right]} \right\rangle } } } \right|{\Im _{t - 1}}} \right] \nonumber \\
  & + \frac{{2{\eta ^2}}}{{{n^2}}}\mathbb{E}\left[ {\left. {\sum\limits_{{i_1} = 1}^n {\sum\limits_{{i_2} \ne {i_1}}^n {I_{{i_1}}^tI_{{i_2}}^t} } \sum\limits_{{k_1} = 1}^m {\sum\limits_{{k_2} = 1}^m {\left\langle {\frac{{{s_{{i_1},{k_1}}}}}{{{d_{{k_1}}}}}\left[ {\nabla {f_{{k_1}}}({{{\mathbf{\bar x}}}^t}) - \nabla f({{\mathbf{x}}^*})} \right],\frac{{{s_{{i_2},{k_2}}}}}{{{d_{{k_2}}}}}\left[ {\nabla {f_{{k_2}}}({{{\mathbf{\bar x}}}^t}) - \nabla f({{\mathbf{x}}^*})} \right]} \right\rangle } } } \right|{\Im _{t - 1}}} \right] \nonumber \\
   = &\frac{{2{\eta ^2}}}{{{n^2}}}\left( {1 - p} \right)\sum\limits_{i = 1}^n {\sum\limits_{{k_1} = 1}^m {\sum\limits_{{k_2} = 1}^m {\left\langle {\frac{{{s_{i,{k_1}}}}}{{{d_{{k_1}}}}}\left[ {\nabla {f_{{k_1}}}({{{\mathbf{\bar x}}}^t}) - \nabla f({{\mathbf{x}}^*})} \right],\frac{{{s_{i,{k_2}}}}}{{{d_{{k_2}}}}}\left[ {\nabla {f_{{k_2}}}({{{\mathbf{\bar x}}}^t}) - \nabla f({{\mathbf{x}}^*})} \right]} \right\rangle } } }  \nonumber \\
  & + \frac{{2{\eta ^2}}}{{{n^2}}}{\left( {1 - p} \right)^2}\sum\limits_{{i_1} = 1}^n {\sum\limits_{{i_2} \ne {i_1}}^n {\sum\limits_{{k_1} = 1}^m {\sum\limits_{{k_2} = 1}^m {\left\langle {\frac{{{s_{{i_1},{k_1}}}}}{{{d_{{k_1}}}}}\left[ {\nabla {f_{{k_1}}}({{{\mathbf{\bar x}}}^t}) - \nabla f({{\mathbf{x}}^*})} \right],\frac{{{s_{{i_2},{k_2}}}}}{{{d_{{k_2}}}}}\left[ {\nabla {f_{{k_2}}}({{{\mathbf{\bar x}}}^t}) - \nabla f({{\mathbf{x}}^*})} \right]} \right\rangle } } } }  \nonumber \\
   = &\frac{{2{\eta ^2}}}{{{n^2}}}\left( {p - {p^2}} \right)\sum\limits_{i = 1}^n {\sum\limits_{k = 1}^m {\frac{{{s_{i,k}}}}{{d_k^2}}\left\langle {\nabla {f_k}({{{\mathbf{\bar x}}}^t}) - \nabla f({{\mathbf{x}}^*}),\nabla {f_k}({{{\mathbf{\bar x}}}^t}) - \nabla f({{\mathbf{x}}^*})} \right\rangle } }  \nonumber \\
 &+ \frac{{2{\eta ^2}}}{{{n^2}}}\left( {p - {p^2}} \right)\sum\limits_{i = 1}^n {\sum\limits_{{k_1} = 1}^m {\sum\limits_{{k_2} \ne {k_1}}^m {\frac{{{s_{i,{k_1}}}{s_{i,{k_2}}}}}{{{d_{{k_1}}}{d_{{k_2}}}}}\left\langle {\nabla {f_{{k_1}}}({{{\mathbf{\bar x}}}^t}) - \nabla f({{\mathbf{x}}^*}),\nabla {f_{{k_2}}}({{{\mathbf{\bar x}}}^t}) - \nabla f({{\mathbf{x}}^*})} \right\rangle } } }  \nonumber \\
 &+ \frac{{2{\eta ^2}}}{{{n^2}}}{\left( {1 - p} \right)^2}{\left\| {\sum\limits_{i = 1}^n {\sum\limits_{k = 1}^m {\frac{{{s_{i,k}}}}{{{d_k}}}\left[ {\nabla {f_k}({{{\mathbf{\bar x}}}^t}) - \nabla f({{\mathbf{x}}^*})} \right]} } } \right\|^2} \nonumber \\
   =& \frac{{2{\eta ^2}}}{{{n^2}}}\left( {p - {p^2}} \right)\sum\limits_{k = 1}^m {\left( {\frac{1}{{{d_k}}} - \frac{1}{n}} \right){{\left\| {\nabla {f_k}({{{\mathbf{\bar x}}}^t}) - \nabla f({{\mathbf{x}}^*})} \right\|}^2}}  \nonumber \\
 & + \left[ {\frac{{2{\eta ^2}}}{{{n^3}}}\left( {p - {p^2}} \right) + \frac{{2{\eta ^2}}}{{{n^2}}}{{\left( {1 - p} \right)}^2}} \right]{\left\| {\sum\limits_{k = 1}^m {\left[ {\nabla {f_k}({{{\mathbf{\bar x}}}^t}) - \nabla {f_k}({{\mathbf{x}}^*})} \right]} } \right\|^2} \nonumber \\
   = &\frac{{2{\eta ^2}}}{{{n^2}}}\left( {p - {p^2}} \right)\sum\limits_{k = 1}^m {\left( {\frac{1}{{{d_k}}} - \frac{1}{n}} \right){{\left\| {\nabla {f_k}({{{\mathbf{\bar x}}}^t}) - \nabla {f_k}({{\mathbf{x}}^*}) + \nabla {f_k}({{\mathbf{x}}^*}) - \nabla f({{\mathbf{x}}^*})} \right\|}^2}}  \nonumber \\
  & + \left[ {\frac{{2{\eta ^2}}}{{{n^3}}}\left( {p - {p^2}} \right) + \frac{{2{\eta ^2}}}{{{n^2}}}{{\left( {1 - p} \right)}^2}} \right]{\left\| {\sum\limits_{k = 1}^m {\left[ {\nabla {f_k}({{{\mathbf{\bar x}}}^t}) - \nabla {f_k}({{\mathbf{x}}^*})} \right]} } \right\|^2} \nonumber \\
   \leq &\frac{{4{\eta ^2}}}{{{n^2}}}\left( {p - {p^2}} \right)\sum\limits_{k = 1}^m {\left( {\frac{1}{{{d_k}}} - \frac{1}{n}} \right)\left( {{L^2}{{\left\| {{{{\mathbf{\bar x}}}^t} - {{\mathbf{x}}^*}} \right\|}^2} + {C^2}} \right)}  + \left[ {\frac{{2{\eta ^2}}}{{{n^3}}}\left( {p - {p^2}} \right) + \frac{{2{\eta ^2}}}{{{n^2}}}{{\left( {1 - p} \right)}^2}} \right]m\sum\limits_{k = 1}^m {{L^2}{{\left\| {{{{\mathbf{\bar x}}}^t} - {{\mathbf{x}}^*}} \right\|}^2}} \nonumber \\
   =& \left[ {\frac{{2{\eta ^2}}}{{{n^3}}}\left( {p - {p^2}} \right){m^2}{L^2} + \frac{{2{\eta ^2}}}{{{n^2}}}{{\left( {1 - p} \right)}^2}{m^2}{L^2} + \frac{{4{\eta ^2}}}{{{n^2}}}\left( {p - {p^2}} \right){L^2}\sum\limits_{k = 1}^m {\left( {\frac{1}{{{d_k}}} - \frac{1}{n}} \right)} } \right]{\left\| {{{{\mathbf{\bar x}}}^t} - {{\mathbf{x}}^*}} \right\|^2} \nonumber\\
   &+ \frac{{4{\eta ^2}}}{{{n^2}}}\left( {p - {p^2}} \right){C^2}\sum\limits_{k = 1}^m {\left( {\frac{1}{{{d_k}}} - \frac{1}{n}} \right)} .
\end{align}
\end{figure*}
where \(\sum\limits_{i = 1}^n {{s_{i,k}}}  = {d_k}\), \(\sum\limits_{i = 1}^n {{s_{i,{k_1}}}{s_{i,{k_2}}}}  = \frac{{{d_{{k_1}}}{d_{{k_2}}}}}{n}\) ($k_1\neq k_2$), Assumptions \ref{assumption-loss-function}-\ref{subsets variation} and the basic inequality in (\ref{ineq2}) are employed. For the first term in (\ref{a11de2}),
\begin{figure*}
\begin{align}
    \label{a11de21}
 & \frac{{2{\eta ^2}}}{{{n^2}}}\mathbb{E}\left[ {\left. {{{\left\| {\sum\limits_{i = 1}^n {I_i^t} \sum\limits_{k = 1}^m {\frac{{{s_{i,k}}}}{{{d_k}}}\left[ {\nabla {f_k}({\mathbf{x}}_i^t) - \nabla {f_k}({{{\mathbf{\bar x}}}^t})} \right]} } \right\|}^2}} \right|{\Im _{t - 1}}} \right] \nonumber \\
   = &\frac{{2{\eta ^2}}}{{{n^2}}}\mathbb{E}\left[ {\left. {\sum\limits_{i = 1}^n {I_i^t\sum\limits_{{k_1} = 1}^m {\sum\limits_{{k_2} = 1}^m {\frac{{{s_{i,{k_1}}}}}{{{d_{{k_1}}}}}\frac{{{s_{i,{k_2}}}}}{{{d_{{k_2}}}}}\left\langle {\left[ {\nabla {f_{{k_1}}}({\mathbf{x}}_i^t) - \nabla {f_{{k_1}}}({{{\mathbf{\bar x}}}^t})} \right],\left[ {\nabla {f_{{k_2}}}({\mathbf{x}}_i^t) - \nabla {f_{{k_2}}}({{{\mathbf{\bar x}}}^t})} \right]} \right\rangle } } } } \right|{\Im _{t - 1}}} \right] \nonumber \\
  & + \frac{{2{\eta ^2}}}{{{n^2}}}\mathbb{E}\left[ {\left. {\sum\limits_{{i_1} = 1}^n {\sum\limits_{{i_2} \ne {i_1}}^n {I_{{i_1}}^tI_{{i_2}}^t} \sum\limits_{{k_1} = 1}^m {\sum\limits_{{k_2} = 1}^m {\frac{{{s_{{i_1},{k_1}}}}}{{{d_{{k_1}}}}}\frac{{{s_{{i_2},{k_2}}}}}{{{d_{{k_2}}}}}\left\langle {\left[ {\nabla {f_{{k_1}}}({\mathbf{x}}_{{i_1}}^t) - \nabla {f_{{k_1}}}({{{\mathbf{\bar x}}}^t})} \right],\left[ {\nabla {f_{{k_2}}}({\mathbf{x}}_{{i_2}}^t) - \nabla {f_{{k_2}}}({{{\mathbf{\bar x}}}^t})} \right]} \right\rangle } } } } \right|{\Im _{t - 1}}} \right] \nonumber \\
   =& \frac{{2{\eta ^2}}}{{{n^2}}}\left( {p - {p^2}} \right)\sum\limits_{i = 1}^n {\sum\limits_{{k_1} = 1}^m {\sum\limits_{{k_2} = 1}^m {\frac{{{s_{i,{k_1}}}}}{{{d_{{k_1}}}}}\frac{{{s_{i,{k_2}}}}}{{{d_{{k_2}}}}}\left\langle {\left[ {\nabla {f_{{k_1}}}({\mathbf{x}}_i^t) - \nabla {f_{{k_1}}}({{{\mathbf{\bar x}}}^t})} \right],\left[ {\nabla {f_{{k_2}}}({\mathbf{x}}_i^t) - \nabla {f_{{k_2}}}({{{\mathbf{\bar x}}}^t})} \right]} \right\rangle } } }  \nonumber \\
 & + \frac{{2{\eta ^2}}}{{{n^2}}}{\left( {1 - p} \right)^2}\sum\limits_{{i_1} = 1}^n {\sum\limits_{{i_2} = 1}^n {\sum\limits_{{k_1} = 1}^m {\sum\limits_{{k_2} = 1}^m {\frac{{{s_{{i_1},{k_1}}}}}{{{d_{{k_1}}}}}\frac{{{s_{{i_2},{k_2}}}}}{{{d_{{k_2}}}}}\left\langle {\left[ {\nabla {f_{{k_1}}}({\mathbf{x}}_{{i_1}}^t) - \nabla {f_{{k_1}}}({{{\mathbf{\bar x}}}^t})} \right],\left[ {\nabla {f_{{k_2}}}({\mathbf{x}}_{{i_2}}^t) - \nabla {f_{{k_2}}}({{{\mathbf{\bar x}}}^t})} \right]} \right\rangle } } } }  \nonumber \\
   \leq& \frac{{2{\eta ^2}}}{{{n^2}}}\left( {p - {p^2}} \right)\sum\limits_{i = 1}^n {\sum\limits_{{k_1} = 1}^m {\sum\limits_{{k_2} = 1}^m {\frac{{{s_{i,{k_1}}}}}{{{d_{{k_1}}}}}\frac{{{s_{i,{k_2}}}}}{{{d_{{k_2}}}}}\frac{{{{\left\| {\nabla {f_{{k_1}}}({\mathbf{x}}_i^t) - \nabla {f_{{k_1}}}({{{\mathbf{\bar x}}}^t})} \right\|}^2} + {{\left\| {\nabla {f_{{k_2}}}({\mathbf{x}}_i^t) - \nabla {f_{{k_2}}}({{{\mathbf{\bar x}}}^t})} \right\|}^2}}}{2}} } }  \nonumber \\
  & + \frac{{2{\eta ^2}}}{{{n^2}}}{\left( {1 - p} \right)^2}\sum\limits_{{i_1} = 1}^n {\sum\limits_{{i_2} = 1}^n {\sum\limits_{{k_1} = 1}^m {\sum\limits_{{k_2} = 1}^m {\frac{{{s_{{i_1},{k_1}}}}}{{{d_{{k_1}}}}}\frac{{{s_{{i_2},{k_2}}}}}{{{d_{{k_2}}}}}\frac{{{{\left\| {\nabla {f_{{k_1}}}({\mathbf{x}}_{{i_1}}^t) - \nabla {f_{{k_1}}}({{{\mathbf{\bar x}}}^t})} \right\|}^2} + {{\left\| {\nabla {f_{{k_2}}}({\mathbf{x}}_{{i_2}}^t) - \nabla {f_{{k_2}}}({{{\mathbf{\bar x}}}^t})} \right\|}^2}}}{2}} } } }  \nonumber \\
   \leq &\frac{{2{\eta ^2}}}{{{n^2}}}\left( {p - {p^2}} \right){L^2}\sum\limits_{i = 1}^n {\sum\limits_{{k_1} = 1}^m {\sum\limits_{{k_2} = 1}^m {\frac{{{s_{i,{k_1}}}}}{{{d_{{k_1}}}}}\frac{{{s_{i,{k_2}}}}}{{{d_{{k_2}}}}}{{\left\| {{\mathbf{x}}_i^t - {{{\mathbf{\bar x}}}^t}} \right\|}^2}} } }  \nonumber \\
  & + \frac{{2{\eta ^2}}}{{{n^2}}}{\left( {1 - p} \right)^2}{L^2}\sum\limits_{{i_1} = 1}^n {\sum\limits_{{i_2} = 1}^n {\sum\limits_{{k_1} = 1}^m {\sum\limits_{{k_2} = 1}^m {\frac{{{s_{{i_1},{k_1}}}}}{{{d_{{k_1}}}}}\frac{{{s_{{i_2},{k_2}}}}}{{{d_{{k_2}}}}}\frac{{{{\left\| {{\mathbf{x}}_{{i_1}}^t - {{{\mathbf{\bar x}}}^t}} \right\|}^2} + {{\left\| {{\mathbf{x}}_{{i_2}}^t - {{{\mathbf{\bar x}}}^t}} \right\|}^2}}}{2}} } } }  \nonumber \\
   \leq &\left( {\frac{{2{\eta ^2}}}{{{n^2}}}\left( {p - {p^2}} \right){L^2}a_{\max }^2 + \frac{{2{\eta ^2 m}}}{n^2}{a_{\max }}{{\left( {1 - p} \right)}^2}{L^2}} \right)\left\| {{{\mathbf{X}}^t} - {{{\mathbf{\bar X}}}^t}} \right\|_F^2.
\end{align}
\end{figure*}
we have the bound in (\ref{a11de21}), where $L$-smoothness and the following inequality are applied:
\begin{align}
    \left\langle {{\mathbf{a}},{\mathbf{b}}} \right\rangle  \leq \frac{{{{\left\| {\mathbf{a}} \right\|}^2} + {{\left\| {\mathbf{b}} \right\|}^2}}}{2}, \forall\mathbf{a},\mathbf{b} \in \mathbb{R}^d.
\end{align}

Next, substituting (\ref{a1_1_3}) and (\ref{a11de21}) into (\ref{a11de2}), we can bound the second term in (\ref{a1_1_de}) as 
\begin{align}
    \label{q222}
  \mathbb{E}&\left[ {\left. {{{\left\| {\frac{\eta }{n}\sum\limits_{i = 1}^n {I_i^t} \sum\limits_{k = 1}^m {\frac{1}{{{d_k}}}\nabla {f_k}({\mathbf{x}}_i^t){s_{i,k}}} } \right\|}^2}} \right|{\Im _{t - 1}}} \right] \nonumber \\
   \leq&   \left( {pa_{\max }^2 + m{a_{\max }}\left( {1 - p} \right)} \right)\frac{{2{\eta ^2}}}{{{n^2}}}{L^2}\left( {1 - p} \right)\left\| {{{\mathbf{X}}^t} - {{{\mathbf{\bar X}}}^t}} \right\|_F^2 \nonumber \\
  & + \left[ {\frac{{2{\eta ^2}}}{{{n^3}}}\left( {p - {p^2}} \right){m^2}{L^2} + \frac{{2{\eta ^2}}}{{{n^2}}}{{\left( {1 - p} \right)}^2}{m^2}{L^2}} \right]{\left\| {{{{\mathbf{\bar x}}}^t} - {{\mathbf{x}}^*}} \right\|^2} \nonumber \\
  & + \left[ {\frac{{4{\eta ^2}}}{{{n^2}}}\left( {p - {p^2}} \right){L^2}\sum\limits_{k = 1}^m {\left( {\frac{1}{{{d_k}}} - \frac{1}{n}} \right)} } \right]{\left\| {{{{\mathbf{\bar x}}}^t} - {{\mathbf{x}}^*}} \right\|^2} \nonumber \\
   &+ \frac{{4{\eta ^2}}}{{{n^2}}}\left( {p - {p^2}} \right){C^2}\sum\limits_{k = 1}^m {\left( {\frac{1}{{{d_k}}} - \frac{1}{n}} \right)}.
\end{align}

The third term in (\ref{a1_1_de}) can be expressed as    
    \begin{align}
    \label{a1_1_de3}
    -& \frac{{2\eta }}{n}\mathbb{E}\left[ {\left. {\left\langle {{{{\mathbf{\bar x}}}^t} - {{\mathbf{x}}^*},\sum\limits_{i = 1}^n {I_i^t} \sum\limits_{k = 1}^m {\frac{1}{{{d_k}}}\nabla {f_k}({\mathbf{x}}_i^t){s_{i,k}}} } \right\rangle } \right|{\Im _{t - 1}}} \right] \nonumber \\
   =&  - \frac{{2\eta }}{n}\left( {1 - p} \right)\sum\limits_{i = 1}^n {\left\langle {{{{\mathbf{\bar x}}}^t} - {{\mathbf{x}}^*},\sum\limits_{k = 1}^m {\frac{1}{{{d_k}}}\nabla {f_k}({\mathbf{x}}_i^t){s_{i,k}}} } \right\rangle }  \nonumber \\
   = & - \frac{{2\eta }}{n}\left( {1 - p} \right)\sum\limits_{i = 1}^n {\sum\limits_{k = 1}^m {\frac{{{s_{i,k}}}}{{{d_k}}}\left\langle {{{{\mathbf{\bar x}}}^t} - {\mathbf{x}}_i^t,\nabla {f_k}({\mathbf{x}}_i^t)} \right\rangle } }  \nonumber \\
  & + \frac{{2\eta }}{n}\left( {1 - p} \right)\sum\limits_{i = 1}^n {\sum\limits_{k = 1}^m {\frac{{{s_{i,k}}}}{{{d_k}}}\left\langle {{{\mathbf{x}}^*} - {\mathbf{x}}_i^t,\nabla {f_k}({\mathbf{x}}_i^t)} \right\rangle } }  \nonumber\\
   \leq & \frac{{2\eta }}{n}\left( {1 - p} \right)\sum\limits_{k = 1}^m {\left[ {{f_k}\left( {{{\mathbf{x}}^*}} \right) - {f_k}\left( {{{{\mathbf{\bar x}}}^t}} \right)} \right]}  \nonumber \\
& + \frac{{\eta L}}{n}\left( {1 - p} \right)\sum\limits_{i = 1}^n {{{\left\| {{{{\mathbf{\bar x}}}^t} - {\mathbf{x}}_i^t} \right\|}^2}{a_i}} \nonumber\\
& - \frac{{\mu \eta }}{n}\left( {1 - p} \right)\sum\limits_{i = 1}^n {{{\left\| {{{\mathbf{x}}^*} - {\mathbf{x}}_i^t} \right\|}^2}{a_i}}  \nonumber\\
   \leq & \frac{{2\eta }}{n}\left( {1 - p} \right)\sum\limits_{k = 1}^m {\left[ {{f_k}\left( {{{\mathbf{x}}^*}} \right) - {f_k}\left( {{{{\mathbf{\bar x}}}^t}} \right)} \right]}  \nonumber \\
& + \frac{{\eta L}}{n}\left( {1 - p} \right){a_{\max }}\sum\limits_{i = 1}^n {{{\left\| {{{{\mathbf{\bar x}}}^t} - {\mathbf{x}}_i^t} \right\|}^2}}  \nonumber \\
 &- \frac{{\mu \eta }}{n}\left( {1 - p} \right){a_{\min }}\sum\limits_{i = 1}^n {{{\left\| {{{\mathbf{x}}^*} - {\mathbf{x}}_i^t} \right\|}^2}}  \nonumber \\
   \leq & \frac{{2\eta }}{n}\left( {1 - p} \right)\sum\limits_{k = 1}^m {\left[ {{f_k}\left( {{{\mathbf{x}}^*}} \right) - {f_k}\left( {{{{\mathbf{\bar x}}}^t}} \right)} \right]} \nonumber\\
& + \frac{{\eta L}}{n}\left( {1 - p} \right){a_{\max }}\left\| {{{{\mathbf{\bar X}}}^t} - {{\mathbf{X}}^t}} \right\|_F^2 \nonumber\\
  & - \frac{{\mu \eta }}{n}\left( {1 - p} \right){a_{\min }}\sum\limits_{i = 1}^n {\left[ {\frac{1}{2}{{\left\| {{{{\mathbf{\bar x}}}^t} - {{\mathbf{x}}^*}} \right\|}^2} - {{\left\| {{{{\mathbf{\bar x}}}^t} - {\mathbf{x}}_i^t} \right\|}^2}} \right]} \nonumber \\
   = & 2\eta \frac{m}{n} \left( {1 - p} \right)\left[ {f\left( {{{\mathbf{x}}^*}} \right) - f\left( {{{{\mathbf{\bar x}}}^t}} \right)} \right] \nonumber \\
 & + \left[ {\frac{{\eta L}}{n}\left( {1 - p} \right){a_{\max }} + \frac{{\mu \eta }}{{n}}\left( {1 - p} \right){a_{\min }}} \right]\left\| {{{{\mathbf{\bar X}}}^t} - {{\mathbf{X}}^t}} \right\|_F^2 \nonumber \\
 & - \frac{1}{2}\mu \eta \left( {1 - p} \right){a_{\min }}{\left\| {{{{\mathbf{\bar x}}}^t} - {{\mathbf{x}}^*}} \right\|^2},
\end{align} 
where $L$-smoothness and $\mu$-strongly convexity and the inequality in (\ref{ineq2}) are utilized.  
Substituting (\ref{q222}) and (\ref{a1_1_de3}) into (\ref{a1_1_de}), we can obtain that 
    \begin{align}
    \label{q12}
  \mathbb{E}&\left[ {\left. {{{\left\| {{{{\mathbf{\bar x}}}^t} - {{\mathbf{x}}^*} - \frac{\eta }{n}\sum\limits_{i = 1}^n {I_i^t} \sum\limits_{k = 1}^m {\frac{1}{{{d_k}}}\nabla {f_k}({\mathbf{x}}_i^t){s_{i,k}}} } \right\|}^2}} \right|{\Im _{t - 1}}} \right] \nonumber \\
   \leq &{q_1}{\left\| {{{{\mathbf{\bar x}}}^t} - {{\mathbf{x}}^*}} \right\|^2} \nonumber \\
  &+ {q_2}\left\| {{{{\mathbf{\bar X}}}^t} - {{\mathbf{X}}^t}} \right\|_F^2{\text{ + }}\frac{{4{\eta ^2}}}{{{n^2}}}\left( {p - {p^2}} \right){C^2}\sum\limits_{k = 1}^m {\left( {\frac{1}{{{d_k}}} - \frac{1}{n}} \right)}  \nonumber \\
  & + 2\eta \frac{m}{n} \left( {1 - p} \right)\left[ {f\left( {{{\mathbf{x}}^*}} \right) - f\left( {{{{\mathbf{\bar x}}}^t}} \right)} \right].
\end{align}  

Next, substituting (\ref{a1_1_2}) and (\ref{q12}) into (\ref{a1_1}) yields Lemma \ref{xt-yitaIG-x*}.

 \section{Proof of Lemma \ref{GOCO-devia-optimal}}
  \label{appendix GOCO-devia-optimal}
For GOCO, for \(\forall \Delta  > 0\), according to (\ref{x1/2}) and (\ref{gossip_goco}), we can derive
\begin{align}   
\label{goco-1}
\mathbb{E}&\left[ {\left. {{{\left\| {{{{\mathbf{\bar x}}}^{t + 1}} - {{\mathbf{x}}^*}} \right\|}^2}} \right|{\Im _{t - 1}}} \right]\nonumber \\
    =& \mathbb{E}\left[ {\left. {{{\left\| {{{{\mathbf{\bar x}}}^{t + \frac{1}{2}}} + \gamma \left( {\left( {{{{\mathbf{X}}}^{t}}({\mathbf{W}} - {\mathbf{I}})} \right) \odot {{\mathbf{B}}^t}} \right)\frac{{\mathbf{1}}}{n} - {{\mathbf{x}}^*}} \right\|}^2}} \right|{\Im _{t - 1}}} \right] \nonumber \\
   = &\mathbb{E}\left[ {\left\| {{{{\mathbf{\bar x}}}^t} - \eta \left( {{{\mathbf{G}}^t} \odot {{\mathbf{B}}^t}} \right)\frac{{\mathbf{1}}}{n}} \right.} \right. \nonumber \\
 & \left. {\left. {{{\left. { + \gamma \left( {\left( {{{{\mathbf{X}}}^{t}}({\mathbf{W}} - {\mathbf{I}})} \right) \odot {{\mathbf{B}}^t}} \right)\frac{{\mathbf{1}}}{n} - {{\mathbf{x}}^*}} \right\|}^2}} \right|{\Im _{t - 1}}} \right] \nonumber \\
   = &\mathbb{E}\left[ {\left\| {{{{\mathbf{\bar x}}}^t} -   \frac{\eta }{n}\sum\limits_{i = 1}^n {I_i^t} {\mathbf{g}}_i^t - {{\mathbf{x}}^*}} \right.} \right. \nonumber \\
 & \left. {\left. {{{\left. { + \gamma \left( {\left( {{{{\mathbf{X}}}^{t}}({\mathbf{W}} - {\mathbf{I}})} \right) \odot {{\mathbf{B}}^t}} \right)\frac{{\mathbf{1}}}{n}} \right\|}^2}} \right|{\Im _{t - 1}}} \right] \nonumber \\
   \leq &\left( {1 + \Delta } \right)\mathbb{E}\left[ {\left. {{{\left\| {{{{\mathbf{\bar x}}}^t} - \frac{\eta }{n}\sum\limits_{i = 1}^n {I_i^t} {\mathbf{g}}_i^t - {{\mathbf{x}}^*}} \right\|}^2}} \right|{\Im _{t - 1}}} \right] \nonumber\\
  & + \left( {1 + \frac{1}{\Delta }} \right)\mathbb{E}\left[ {\left. {{{\left\| {\gamma \left( {\left( {{{{\mathbf{ X}}}^{t}}({\mathbf{W}} - {\mathbf{I}})} \right) \odot {{\mathbf{B}}^t}} \right)\frac{{\mathbf{1}}}{n}} \right\|}^2}} \right|{\Im _{t - 1}}} \right],
\end{align}
 where the following inequality is applied:
\begin{align}
    \label{ineq1}
    \left\| \mathbf{a} + \mathbf{b} \right\|^2 &\leq (1 + \alpha) \left\| \mathbf{a} \right\|^2 + (1 + \alpha^{-1}) \left\| \mathbf{b} \right\|^2, \nonumber\\
  &  \mathbf{a}, \mathbf{b} \in \mathbb{R}^d
, \alpha>0.
\end{align}
and
\begin{align}
\label{alg24222}
{{\mathbf{X}}^{t + \frac{1}{2}}} =& \left[ {{\mathbf{x}}_1^{t + \frac{1}{2}},...,{\mathbf{x}}_n^{t + \frac{1}{2}}} \right].
\end{align}

In (\ref{goco-1}), based on  (\ref{Iit_bernoulli}), for the second term, we have 
\begin{align}
    \label{goco-2}
  \mathbb{E}&\left[ {\left. {{{\left\| {\gamma \left( {\left( {{{{\mathbf{X}}}^{t}}\left( {{\mathbf{W}} - {\mathbf{I}}} \right)} \right) \odot {{\mathbf{B}}^t}} \right)\frac{{\mathbf{1}}}{n}} \right\|}^2}} \right|{\Im _{t - 1}}} \right] \nonumber \\
   = &\mathbb{E}\left[ {\left. {{{\left\| {\gamma \left( {\left( {\left( {{{{\mathbf{X}}}^{t}} - {{{\mathbf{\bar X}}}^{t}}} \right)\left( {{\mathbf{W}} - {\mathbf{I}}} \right)} \right) \odot {{\mathbf{B}}^t}} \right)\frac{{\mathbf{1}}}{n}} \right\|}^2}} \right|{\Im _{t - 1}}} \right] \nonumber \\
   =& \frac{{{\gamma ^2}}}{{{n^2}}}\mathbb{E}\left[ {\left. {{{\left\| {\sum\limits_{i = 1}^n {I_i^t{{\left( {\left( {{{{\mathbf{X}}}^{t}} - {{{\mathbf{\bar X}}}^{t}}} \right)\left( {{\mathbf{W}} - {\mathbf{I}}} \right)} \right)}_i}} } \right\|}^2}} \right|{\Im _{t - 1}}} \right] \nonumber \\
   \leq& \frac{{{\gamma ^2}}}{n}\sum\limits_{i = 1}^n {\mathbb{E}\left[ {\left. {I_i^t{{\left\| {{{\left( {\left( {{{{\mathbf{X}}}^{t}} - {{{\mathbf{\bar X}}}^{t}}} \right)\left( {{\mathbf{W}} - {\mathbf{I}}} \right)} \right)}_i}} \right\|}^2}} \right|{\Im _{t - 1}}} \right]}  \nonumber \\
   =& \frac{{{\gamma ^2}}}{n}\left( {1 - p} \right)\left\| {\left( {{{{\mathbf{X}}}^{t}} - {{{\mathbf{\bar X}}}^{t}}} \right)\left( {{\mathbf{W}} - {\mathbf{I}}} \right)} \right\|_F^2 \nonumber \\
   \leq &\frac{{{\gamma ^2}}}{n}\left( {1 - p} \right)\left\| {{{{\mathbf{X}}}^{t}} - {{{\mathbf{\bar X}}}^{t}}} \right\|_F^2\left\| {{\mathbf{W}} - {\mathbf{I}}} \right\|_2^2 \nonumber \\
   = &\frac{{{\gamma ^2}}}{n}\left( {1 - p} \right){\beta ^2}\left\| {{{{\mathbf{X}}}^{t}} - {{{\mathbf{\bar X}}}^{t}}} \right\|_F^2,
\end{align}
where ${{{\left( {\left( {{{{\mathbf{X}}}^{t}} - {{{\mathbf{\bar X}}}^{t}}} \right)\left( {{\mathbf{W}} - {\mathbf{I}}} \right)} \right)}_i}}$ denotes the \(i{\text{ - th}}\) column in ${\left( {{{{\mathbf{X}}}^{t}} - {{{\mathbf{\bar X}}}^{t}}} \right)\left( {{\mathbf{W}} - {\mathbf{I}}} \right)}$, ${{\mathbf{\bar X}}^{t}}\left( {{\mathbf{W}} - {\mathbf{I}}} \right) = {\mathbf{0}}$ is noted and we have used (\ref{ineq2}) and the following basic inequality:
\begin{align}
       \label{ineq3}
    {\left\| {{\mathbf{AB}}} \right\|_F} \leq {\left\| {\mathbf{A}} \right\|_F}{\left\| {\mathbf{B}} \right\|_2},{\mathbf{A}} \in {\mathbb{R}^{d \times n}}&,{\mathbf{B}} \in  {\mathbb{R}^{n \times n}}.
\end{align}

In (\ref{goco-1}), for the first term, we have already derived the bound in Lemma \ref{xt-yitaIG-x*}. Substituting (\ref{a11_final}) and (\ref{goco-2}) into (\ref{goco-1}) yields Lemma \ref{GOCO-devia-optimal}.

\section{Proof of Lemma \ref{GOCO-divia1}}
  \label{appendix GOCO-divia1}
  According to (\ref{x1/2}) and (\ref{gossip_goco}), we can derive that 
  \begin{align}
      \label{equ-goco-divia1}
  \mathbb{E}&\left[ {\left. {\left\| {{{\mathbf{X}}^{t + 1}} - {{{\mathbf{\bar X}}}^{t + 1}}} \right\|_F^2} \right|{\Im _{t - 1}}} \right] \nonumber \\
   \leq& \mathbb{E}\left[ {\left. {\left\| {{{\mathbf{X}}^{t + 1}} - {{{\mathbf{\bar X}}}^t}} \right\|_F^2} \right|{\Im _{t - 1}}} \right] \nonumber \\
   =& \mathbb{E}\left[ {\left. {\left\| {{{\mathbf{X}}^{t + \frac{1}{2}}} - {{{\mathbf{\bar X}}}^t} + \gamma \left( {\left( {{{\mathbf{X}}^t}\left( {{\mathbf{W}} - {\mathbf{I}}} \right)} \right) \odot {{\mathbf{B}}^t}} \right)} \right\|_F^2} \right|{\Im _{t - 1}}} \right] \nonumber \\
   = &\mathbb{E}\left[ {\left. {\left\| {{{\mathbf{X}}^t} - \eta {{\mathbf{G}}^t} \odot {{\mathbf{B}}^t} - {{{\mathbf{\bar X}}}^t} + \gamma \left( {\left( {{{\mathbf{X}}^t}\left( {{\mathbf{W}} - {\mathbf{I}}} \right)} \right) \odot {{\mathbf{B}}^t}} \right)} \right\|_F^2} \right|{\Im _{t - 1}}} \right] \nonumber \\
   =& \left( {1 - p} \right)\mathbb{E}\left[ {\left. {\left\| {{{\mathbf{X}}^t} - {{{\mathbf{\bar X}}}^t} - \eta {{\mathbf{G}}^t} + \gamma {{\mathbf{X}}^t}\left( {{\mathbf{W}} - {\mathbf{I}}} \right)} \right\|_F^2} \right|{\Im _{t - 1}}} \right] \nonumber\\
   &+ p\mathbb{E}\left[ {\left. {\left\| {{{\mathbf{X}}^t} - {{{\mathbf{\bar X}}}^t}} \right\|_F^2} \right|{\Im _{t - 1}}} \right] \nonumber \\
   \leq &\left( {1 - p} \right)\left( {1 + \varpi } \right)\mathbb{E}\left[ {\left. {\left\| {{{\mathbf{X}}^t} - {{{\mathbf{\bar X}}}^t} + \gamma {{\mathbf{X}}^t}\left( {{\mathbf{W}} - {\mathbf{I}}} \right)} \right\|_F^2} \right|{\Im _{t - 1}}} \right]\nonumber\\
   &+ \left( {1 - p} \right)\left( {1 + \frac{1}{\varpi }} \right)\mathbb{E}\left[ {\left. {\left\| {\eta {{\mathbf{G}}^t}} \right\|_F^2} \right|{\Im _{t - 1}}} \right] \nonumber\\
   &+ p\mathbb{E}\left[ {\left. {\left\| {{{\mathbf{X}}^t} - {{{\mathbf{\bar X}}}^t}} \right\|_F^2} \right|{\Im _{t - 1}}} \right] \nonumber \\
   \leq & \left( {1 - p} \right)\left( {1 + \varpi } \right){\left( {1 - \rho \gamma } \right)^2}\mathbb{E}\left[ {\left. {\left\| {{{\mathbf{X}}^t} - {{{\mathbf{\bar X}}}^t}} \right\|_F^2} \right|{\Im _{t - 1}}} \right]\nonumber\\
   &+ p\mathbb{E}\left[ {\left. {\left\| {{{\mathbf{X}}^t} - {{{\mathbf{\bar X}}}^t}} \right\|_F^2} \right|{\Im _{t - 1}}} \right] + \left( {1 - p} \right)\left( {1 + \frac{1}{\varpi }} \right){\eta ^2}{G^2}w_2, \nonumber\\
  & \forall \varpi  > 0,
  \end{align}
  based on (\ref{ineq1}) and Lemma 17 in \cite{koloskova2019decentralized}, and the following bound:  
  \begin{align}
      \label{lemma34}
  \mathbb{E}&\left[ {\left. {\left\| {{{\mathbf{G}}^t}} \right\|_F^2} \right|{\Im _{t - 1}}} \right] \nonumber \\
  =&\sum\limits_{i = 1}^n {\mathbb{E}\left[ {\left. {{{\left\| {\sum\limits_{k = 1}^m {\frac{1}{{{d_k}}}\nabla {F_k}({\mathbf{x}}_i^t,\xi _{i,k}^t){s_{i,k}}} } \right\|}^2}} \right|{\Im _{t - 1}}} \right]}  \nonumber \\
   \leq& {G^2}\sum\limits_{i = 1}^n {\sum\limits_{{k_1} = 1}^m {\sum\limits_{k = 1}^m {\frac{{{s_{i,{k_1}}}{s_{i,k}}}}{{d_k^2}}} } }  \nonumber \\
   =& {G^2}\left[ {\sum\limits_{k = 1}^m {\frac{1}{{{d_k}}} + } \frac{1}{n}\left( {\sum\limits_{k = 1}^m {\frac{1}{{{d_k}}}} } \right)\left( {\sum\limits_{k = 1}^m {{d_k}} } \right) - \frac{m}{n}} \right].
  \end{align}
 Here, the first equality is obtained from (\ref{g}), the inequality is derived based on (\ref{boundvar2}) and (\ref{ineq2}), and the last equality is yielded by 
  \begin{align}
      \label{lemma35}
 & \sum\limits_{i = 1}^n {\sum\limits_{{k_1} = 1}^m {\sum\limits_{k = 1}^m {\frac{{{s_{i,{k_1}}}{s_{i,k}}}}{{d_k^2}}} } }  \nonumber \\
  & = \sum\limits_{k = 1}^m {\frac{{\sum\limits_{i = 1}^n {{s_{i,k}}} }}{{d_k^2}} + } \sum\limits_{{k_1} = 1}^m {\sum\limits_{k \ne {k_1}}^m {\frac{{\sum\limits_{i = 1}^n {{s_{i,{k_1}}}{s_{i,k}}} }}{{d_k^2}}} }   \nonumber \\
  & = \sum\limits_{k = 1}^m {\frac{1}{{{d_k}}} + } \sum\limits_{{k_1} = 1}^m {\sum\limits_{k \ne {k_1}}^m {\frac{{{d_{{k_1}}}}}{{n{d_k}}}} }  \nonumber \\
  & = \sum\limits_{k = 1}^m {\frac{1}{{{d_k}}} + } \sum\limits_{{k_1} = 1}^m {\sum\limits_{k = 1}^m {\frac{{{d_{{k_1}}}}}{{n{d_k}}}} }  - \sum\limits_{k = 1}^m {\frac{1}{n}}  \nonumber \\
  & = \sum\limits_{k = 1}^m {\frac{1}{{{d_k}}} + } \frac{1}{n}\left( {\sum\limits_{k = 1}^m {\frac{1}{{{d_k}}}} } \right)\left( {\sum\limits_{k = 1}^m {{d_k}} } \right) - \frac{m}{n},
  \end{align}
  based on the pair-wise balanced manner of data distribution. 
  
  By setting \(\varpi  = \frac{{\gamma \rho }}{2}\) in (\ref{equ-goco-divia1}), Lemma \ref{GOCO-divia1} is proved.

\end{document}